\documentclass{article} 
\usepackage{iclr2027_conference,times}

\usepackage{amsmath,amsfonts,bm}

\def\eqref#1{equation~\ref{#1}}

\def\1{\bm{1}}

\DeclareMathAlphabet{\mathsfit}{\encodingdefault}{\sfdefault}{m}{sl}
\SetMathAlphabet{\mathsfit}{bold}{\encodingdefault}{\sfdefault}{bx}{n}

\usepackage{url}

\usepackage{graphicx}
\usepackage{xspace}
\usepackage[utf8]{inputenc} 
\usepackage[T1]{fontenc}    
\usepackage[dvipsnames,table]{xcolor}
\usepackage[
  colorlinks=true,
  linkcolor=NavyBlue,
  citecolor=TealBlue,
  urlcolor=RoyalBlue
]{hyperref}
\usepackage{bbm}
\usepackage{inconsolata}    
\usepackage{amsmath}      
\usepackage{amssymb}      
\usepackage{amsthm}       

\usepackage{enumitem}     
\usepackage{booktabs}     
\usepackage{placeins}    
\usepackage{needspace}   
\usepackage{algorithm}    
\usepackage{natbib}       
\usepackage{threeparttable} 
\usepackage{tabularx}
\usepackage{makecell}
\usepackage{subcaption}
\usepackage{titletoc}
\usepackage[breakable]{tcolorbox}
\usepackage{changepage}
\usepackage{fvextra}
\DeclareUnicodeCharacter{2192}{\ensuremath{\rightarrow}}
\DeclareUnicodeCharacter{2212}{\ensuremath{-}}
\tcbuselibrary{skins}

\definecolor{PromptBackground}{RGB}{248,249,251}
\definecolor{PromptFrame}{RGB}{200,208,220}
\definecolor{PromptSystem}{RGB}{37,82,140}
\definecolor{PromptUser}{RGB}{34,120,78}
\definecolor{PromptVariable}{RGB}{190,60,30}

\newcommand{\PromptTextSetup}{%
  \setlength{\parindent}{0pt}%
  \setlength{\parskip}{0.3em}%
  \setlist[itemize]{leftmargin=1.4em,itemsep=0.2em,topsep=0.2em,
    parsep=0pt,partopsep=0pt}%
  \setlist[enumerate,1]{leftmargin=1.9em,labelwidth=1.2em,labelsep=0.4em,
    align=left,itemsep=0.2em,topsep=0.2em,parsep=0pt,partopsep=0pt}%
}

\tcbset{
  paper prompt/.style={
    enhanced,breakable,
    width=\linewidth,
    colback=PromptBackground,colframe=PromptFrame,
    boxrule=0.6pt,arc=4pt,
    boxsep=0pt,left=10pt,right=10pt,top=12pt,bottom=8pt,
    fontupper=\footnotesize\sffamily\raggedright,
    fonttitle=\footnotesize\sffamily\bfseries,
    coltitle=white,
    titlerule=0pt,toptitle=3pt,bottomtitle=3pt,
    lefttitle=10pt,righttitle=10pt,
    attach boxed title to top left={xshift=10pt,yshift=-7pt},
    boxed title style={
      boxrule=0pt,arc=2pt,boxsep=0pt,
      left=6pt,right=6pt,top=3pt,bottom=3pt
    },
    before skip=9pt,after skip=4pt,
    pad at break*=6pt,
    lines before break=4,
    before upper=\PromptTextSetup
  },
  system prompt/.style={
    paper prompt,colbacktitle=PromptSystem,
    before upper={\PromptTextSetup\colorlet{PromptAccent}{PromptSystem}},
    title={System Prompt: #1},
    title after break={System Prompt: #1 (continued)}
  },
  user prompt/.style={
    paper prompt,colbacktitle=PromptUser,
    before upper={\PromptTextSetup\colorlet{PromptAccent}{PromptUser}},
    title={User Prompt: #1},
    title after break={User Prompt: #1 (continued)}
  },
  prompt excerpt/.style={
    enhanced,colback=white,colframe=white,boxrule=0pt,arc=0pt,
    borderline west={1.5pt}{0pt}{PromptUser!45!PromptFrame},
    boxsep=0pt,left=7pt,right=5pt,top=5pt,bottom=5pt,
    before skip=4pt,after skip=4pt,
    fontupper=\footnotesize\sffamily\raggedright
  }
}

\newcommand{\PromptHeading}[1]{%
  \par\addvspace{0.6\baselineskip}%
  {\bfseries\color{PromptAccent}#1\par}%
  \nobreak\smallskip\nobreak
}
\newcommand{\PromptSubheading}[1]{%
  \par\addvspace{0.4\baselineskip}%
  {\bfseries #1\par}%
  \nobreak\smallskip\nobreak
}
\newcommand{\PromptPlaceholder}[1]{%
  \textcolor{PromptVariable}{\texttt{\{#1\}}}%
}

\DefineVerbatimEnvironment{PromptCode}{Verbatim}{
  fontsize=\scriptsize,
  samepage=true,
  breaklines=true,breakanywhere=false,
  breaksymbolleft={},breaksymbolright={},
  frame=leftline,rulecolor=\color{PromptFrame},
  framerule=1pt,framesep=6pt
}

\usepackage[noend]{algorithmic}
\definecolor{phasecolor}{HTML}{0ABAB5}   
\newcommand{\phase}[1]{\vspace{2pt}{\color{phasecolor}$\blacktriangleright$\ \textbf{#1}}}
\newcommand{\sidenote}[1]{\hfill{\color{black!55}\footnotesize\itshape #1}}
\algsetup{indent=1.2em}

\theoremstyle{plain}

\newtheorem{proposition}{Proposition}

\theoremstyle{definition}

\theoremstyle{remark}

\iclrfinalcopy

\title{AlphaBet: Reinforcement Learning for \\
LLM Event Forecasting Agents}

\newcommand{\projectdataurl}{https://huggingface.co/collections/liqinye/alphabet}
\newcommand{\projectcodeurl}{https://github.com/AlphaBet-Agent/AlphaBet}
\newcommand{\projectwebsiteurl}{https://alphabet-agent.github.io/}

\definecolor{ProjectLink}{HTML}{234F9F}
\newcommand{\projectresource}[3]{%
    \begingroup
    \edef\resourceurl{#1}%
    \def\resourcelabel{%
        \mbox{%
            \makebox[1.45em][c]{%
                \raisebox{-0.2\height}{%
                    #2%
                }%
            }%
            \hspace{0.35em}\textcolor{ProjectLink}{\texttt{#3}}%
        }%
    }%
    \ifx\resourceurl\empty
        \resourcelabel
    \else
        \href{\resourceurl}{\resourcelabel}%
    \fi
    \endgroup
}
\newcommand{\projectresources}{%
    \raisebox{-5pt}[\height][\depth]{%
    \normalfont\sffamily\fontsize{9}{11}\selectfont
    \projectresource{\projectdataurl}{\includegraphics[height=1.25em]{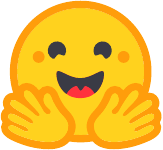}}{Data \& Models}%
    \hspace{1.5em}%
    \projectresource{\projectcodeurl}{\includegraphics[height=1.25em]{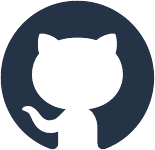}}{Code}%
    \hspace{1.5em}%
    \projectresource{\projectwebsiteurl}{\includegraphics[height=1.25em]{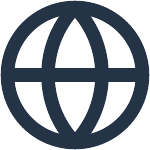}}{Website}}%
}

\author{%
 Liqin Ye\textsuperscript{1\textdagger}, 
 Rongzhi Zhang\textsuperscript{2}, 
 Xin Liu\textsuperscript{2},
 Zheng Li\textsuperscript{2},
 \textbf{Yuan He}\textsuperscript{2},\\
 \textbf{Qingyu Yin}\textsuperscript{2},
 \textbf{Priyanka Nigam}\textsuperscript{2},
 \textbf{Chao Zhang}\textsuperscript{1}\\[3pt]
 \textsuperscript{1}Georgia Institute of Technology,
 \textsuperscript{2}Amazon\\[7pt]
  \makebox[\dimexpr\textwidth-2\tabcolsep\relax][l]{\projectresources}
}

\definecolor{ours}{RGB}{226,238,252}
\newcommand{\ours}{\textsc{AlphaBet}\xspace}
\makeatletter
\newcommand{\metricci}[2]{%
  \begin{tabular}[t]{@{}c@{}}
    #1\\[-4pt]
    \noalign{\global\let\CT@row@color\relax}%
    {\scriptsize[#2]}
  \end{tabular}%
}
\newcommand{\modelcell}[1]{%
  \raisebox{-0.7ex}{#1}%
}
\providecolor{ours}{RGB}{226,238,252}
\providecolor{FDLightBand}{HTML}{EFF3F6}

\providecommand{\metricci}[2]{%
  \shortstack{#1\\[-1pt]{\tiny\color{gray}[#2]}}%
}
\newcommand{\metricsolo}[1]{%
  \begin{tabular}[t]{@{}c@{}}
    #1\\[-5pt]
    \noalign{\global\let\CT@row@color\relax}%
    {\scriptsize\phantom{[0.000, 0.000]}}
  \end{tabular}%
}
\makeatother
\newcommand{\sig}[2]{%
  #1\rlap{$^{#2}$}%
}
\makeatletter
\newcommand{\blfootnote}[1]{%
  \begingroup
  \renewcommand{\thefootnote}{}%
  \renewcommand{\@makefntext}[1]{\noindent##1}%
  \begin{NoHyper}\footnotetext{#1}\end{NoHyper}%
  \endgroup
}
\makeatother

\begin{document}

\maketitle
\lhead{Preprint}
\blfootnote{Corresponding author(s): \texttt{\{liqiny, chaozhang\}@gatech.edu}}
\blfootnote{\textsuperscript{\textdagger}Work done during an internship at Amazon.}
\begin{abstract}
Forecasting an unresolved event requires finding relevant evidence and deciding how strongly it supports each outcome. Yet outcome-based training of Large Language Model (LLM) forecasters often begins with preassembled evidence at a single prediction date, leaving evidence acquisition outside the learning loop. We introduce \ours, an agentic reinforcement learning (RL) framework that trains a single LLM policy to search, reason, and report probabilities across multiple pre-resolution dates. Each dated forecast is scored by a proper scoring rule against the eventual outcome, allowing one resolved event to supervise decisions under different information states. We decompose forecasting regret into an evidence-acquisition gap and a probability-reporting gap, and show that Group Relative Policy Optimization (GRPO)'s standard-deviation normalization can distort the reporting target under multi-class proper scores. On GLM-4.5-Air, \ours reduces the Brier score from 0.731 to 0.626 on Forecast-Dojo and from 0.612 to 0.528 on FutureX. Controlled ablations attribute these gains to multi-date training and show that they come from improvements in both information acquisition and probability reporting. Our empirical results validate that GRPO's std normalization degrades proper-scoring-rule performance while leaving top-outcome accuracy largely unchanged.
\end{abstract}

\begin{figure}[!ht]
    \centering
    \includegraphics[width=0.49\linewidth]{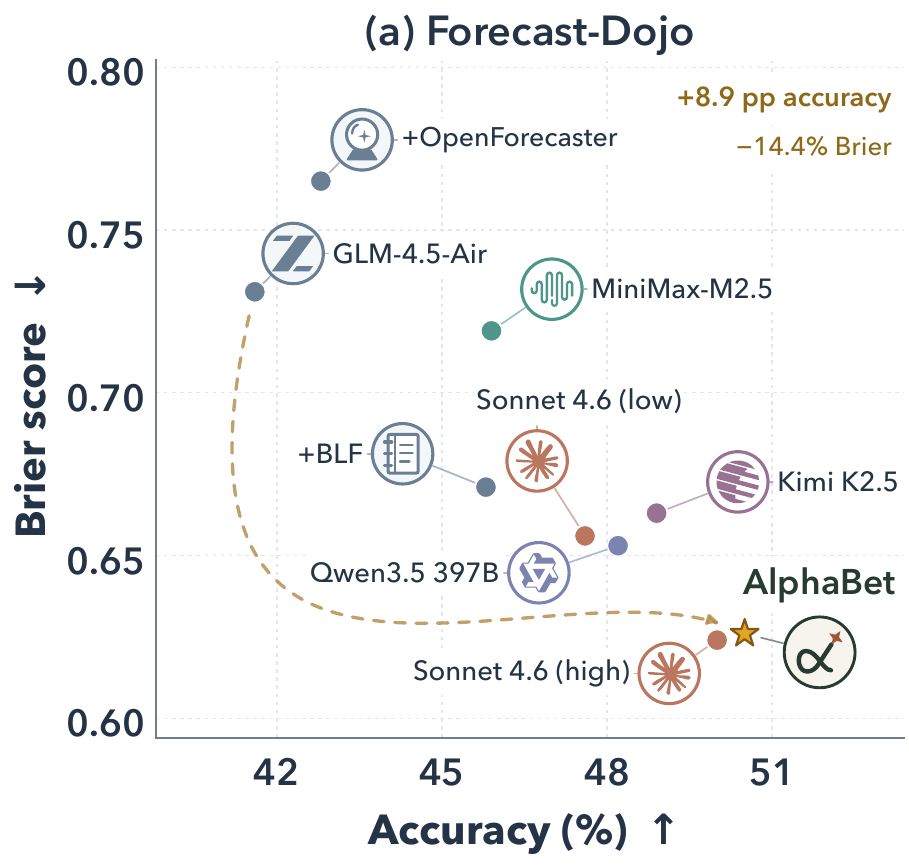}%
    \hfill
    \includegraphics[width=0.49\linewidth]{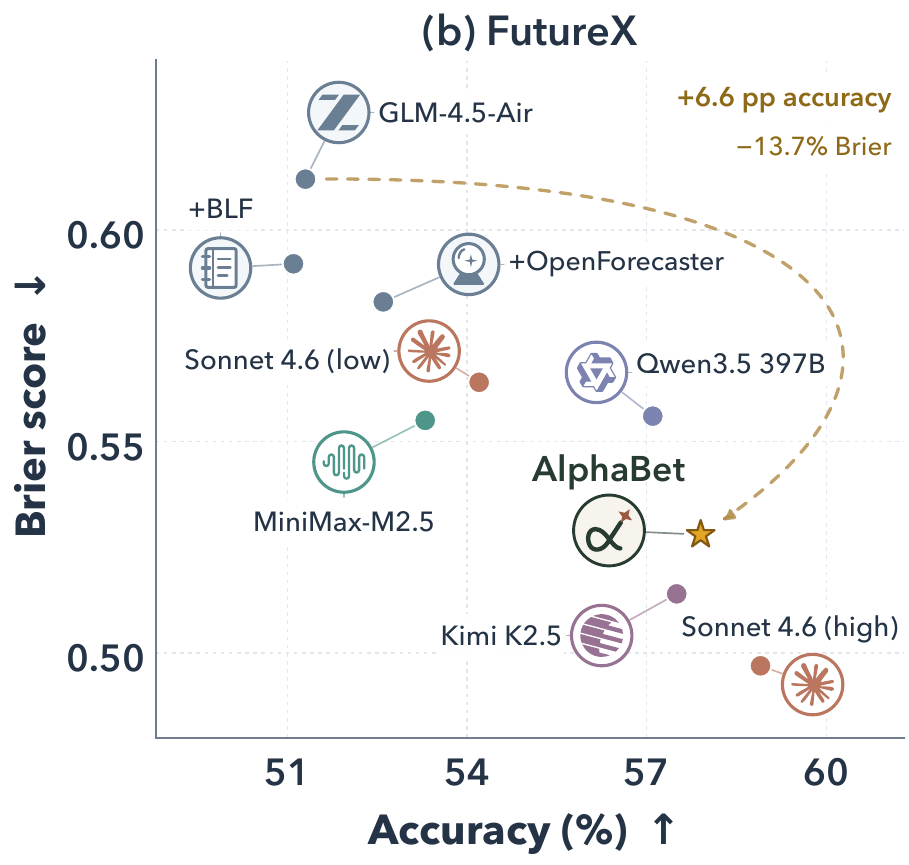}
    \vspace{-0.7em}
    \caption{\textbf{\ours improves forecasting on Forecast-Dojo and FutureX (March 2026 to May 2026).} Dashed arrows connect GLM-4.5-Air to \ours{}$_\text{{\scriptsize\color{black}memory-free}}$. The lower right corresponds to higher accuracy and lower Brier score. Accuracy gains are in percentage points, and Brier reductions are relative to the base model. Table~\ref{tab:main} reports full results and confidence intervals.}
    \label{fig:teaser_results}
\end{figure}

\begingroup
\setlength{\parskip}{4.5pt}
\section{Introduction}
\label{sec:intro}

Many real-world event forecasting questions, such as policy decisions or economic indicators, remain unresolved for weeks or months, while the relevant evidence continues to accumulate. Human forecasters have long practiced \emph{longitudinal forecasting}, revisiting the same unresolved event and updating their beliefs as information evolves~\citep{mellers2014psychological,atanasov2020small}. LLM agents can retrieve and reason over evidence and produce probabilistic forecasts~\citep{futurex,forecastbench}. Recent LLM forecasting benchmarks have begun to assess this temporal dimension more directly, evaluating their forecasts at different points before resolution~\citep{futuresim,yuan2025language,forecastdojo,livemacro}.

Yet existing outcome-based forecasting methods still train only part of
this process. They commonly optimize probability reporting from
preassembled evidence at a single prediction time, leaving evidence
acquisition outside the trained policy and providing no learning signal
for how forecasts should evolve as information changes
~\citep{approachHuman,outcomeRL,openForecaster,scott2026forecasting}.
Forecast-Dojo~\citep{forecastdojo} provides a replayable, temporally indexed forecasting research environment that makes this broader learning problem accessible. This motivates our central question: \emph{how can outcome feedback train evidence-seeking and probability-reporting behavior across changing information states?}

We introduce \ours (Figure~\ref{fig:overview}), an outcome-based RL framework for agentic event forecasting. Using Forecast-Dojo's temporally indexed environment, a single tool-using policy revisits each training event at multiple pre-resolution dates, actively searches and reads the evidence
available at each date, and reports a distribution over outcomes.
Each dated forecast receives proper-score feedback against the eventual
outcome, so one resolved event supervises decisions under multiple
information states. Cross-date memory is optional: the memory-free
variant starts each dated forecast afresh, while the memory-on variant
carries a belief notebook across dates. We show that the resulting forecasting regret decomposes into an information-acquisition gap and a probability-reporting gap (Proposition~\ref{prop:longitudinal_properness}). For policy optimization, we use GRPO~\citep{shao2024deepseekmath} without within-group standard-deviation normalization~\citep{outcomeRL,openForecaster,liu2025understandingr1zeroliketrainingcritical}. We provide a proper-scoring interpretation of this choice.
Extending the binary analysis of
\citet{bereket2025uncalibratedreasoninggrpoinduces}, we show that, at a
fixed acquired context, mean-centering preserves the expected reporting
gradient induced by a strictly proper score, whereas standardization can
reweight outcome classes and change the elicited probability target.

Empirically, \ours improves forecasting performance on both Forecast-Dojo and FutureX. Our strongest variant improves GLM-4.5-Air from 41.6\% to 50.5\% accuracy and from 0.731 to 0.626 Brier on Forecast-Dojo evaluation set, and from 51.3\% to 57.9\% accuracy and from 0.612 to 0.528 Brier on FutureX, outperforming forecasting harnesses and prior RL-based training methods while remaining competitive with substantially larger off-the-shelf models. Multi-date training consistently improves over single-date training. Controlled evidence--reader swaps show improvements in both evidence acquisition and probability reporting. Finally, GRPO ablations support our proper-scoring analysis: within-group standardization degrades probabilistic forecast quality, while removing it preserves the intended training signal.

Our contributions are:
\begin{itemize}[leftmargin=1.2em, itemsep=0.2em, topsep=0.1em, parsep=0pt, partopsep=0pt]
    \item We introduce \ours, an outcome-based agentic RL framework that trains a single policy to actively acquire evidence and report probabilistic forecasts across multiple pre-resolution dates.
    
    \item We characterize outcome-based forecasting across evolving information states: a single realized outcome can properly supervise multiple dated forecasts, forecasting regret decomposes into information-acquisition and probability-reporting gaps, and GRPO normalization can alter the intended multi-class probability target.
    
    \item We establish empirically that multi-date training improves forecasting across two benchmarks, with gains arising from both better evidence acquisition and better probability reporting. GRPO ablations further validate our analysis of normalization and probabilistic forecast quality.
\end{itemize}

\par
\endgroup

\begingroup
\setlength{\parskip}{3pt}
\vspace{-0.5em}
\section{Preliminaries}
\label{sec:preliminaries}

\vspace{-0.4em}
\subsection{Longitudinal Event Forecasting}
\label{sec:longitudinal_forecasting}

\paragraph{Forecasting setup.}
\leavevmode\clubpenalty=0\relax%
Each question $Q$ describes an event in natural language and specifies a finite set of mutually exclusive and exhaustive outcomes $\mathcal{Y}=\{1,\ldots,K\}$. Its resolution criterion specifies how the realized outcome $Y\in\mathcal{Y}$ is determined. During the question's active period
$[\tau_{\mathrm{start}},\tau_{\mathrm{close}}]$, the forecaster revisits $Q$
before resolution at scheduled dates
$\tau_{\mathrm{start}}\leq\tau_1<\cdots<\tau_T\leq\tau_{\mathrm{close}}$.
We index these visits by $t=1,\ldots,T$ and call each a
\emph{forecast step}: step $t$ occurs on date $\tau_t$ and produces
a probability report $p_{Q,t}\in\Delta^{K-1}$ over $\mathcal{Y}$.
The ordered sequence of steps forms a \emph{forecast episode}.

\par
\endgroup

\paragraph{Evolving information corpus.}
The question remains fixed across these steps, but the evidence available to the forecaster evolves. At date $\tau_t$, it can consult $\mathcal{I}_{\leq\tau_t}$, the corpus of information available no later than that date. Sources released later are excluded~\citep{zou2022forecasting,approachHuman}. As new information arrives, the admissible corpus expands:
$\mathcal{I}_{\leq\tau_t}\subseteq\mathcal{I}_{\leq\tau_{t+1}}$. The forecaster typically observes only part of this corpus. In our setting, an LLM agent uses tools to retrieve and analyze evidence, interleaving these calls with reasoning. We write $H_t$ for the Search and Read results from step $t$ only, excluding the agent's reasoning, code, and notes.

\paragraph{Forecast targets.}
One question's forecasts share the same eventual outcome $Y$, but the relevant information changes with the forecast date. We distinguish the posterior supported by the full dated corpus from that supported by the current evidence and incoming notebook $M_{t-1}$, which is empty when memory is disabled:

\noindent\begin{minipage}[b]{0.48\linewidth}
\vspace{-0.5em}
\begin{equation}
    q_{Q,t}(k)
    \triangleq
    \Pr\!\left(Y=k\mid Q,\tau_t,\mathcal{I}_{\leq\tau_t}\right)
    \label{eq:dated_corpus_target}
\end{equation}
\end{minipage}\hfill
\begin{minipage}[b]{0.48\linewidth}
\vspace{-0.5em}
\begin{equation}
    q^H_{Q,t}(k)
    \triangleq
    \Pr\!\left(Y=k\mid Q,\tau_t,H_t,M_{t-1}\right)
    \label{eq:context_target_main}
\end{equation}
\end{minipage}

$q_{Q,t}$ is the \emph{dated-corpus target}, the Bayes-optimal forecast given all admissible information available by $\tau_t$. $q^H_{Q,t}$ is the \emph{acquired-context target}, the Bayes-optimal forecast given $H_t$ and $M_{t-1}$. The agent reports $p_{Q,t}$ based on this acquired context. These three quantities separate what is inferable from the dated corpus, what is inferable from the agent's current evidence and retained memory, and what the agent ultimately reports.

\subsection{Proper Scoring Rules}
\label{sec:proper_scoring}

Evaluating a probabilistic forecast requires more than checking its most
likely outcome: forecasts that choose the same outcome can assign very
different probabilities. Proper scoring rules score these probabilities
directly and make truthful reporting optimal in expectation
~\citep{gneiting2007strictly}, providing both an evaluation criterion and
a training signal from the reported distribution and eventual outcome
without observing the latent target.
Formally, $S$ is \emph{strictly proper} if, for every outcome distribution $q$
and report $p$,
\vspace{-0.2em}
\begin{equation}
    \mathbb{E}_{Y\sim q}[S(q,Y)]
    \geq
    \mathbb{E}_{Y\sim q}[S(p,Y)],
    \qquad
    \text{with equality if and only if }p=q.
    \label{eq:strict_properness}
    \vspace{-0.2em}
\end{equation}
Conditioned on the acquired context, strict properness therefore makes
$q^H_{Q,t}$ the unique optimal report in expectation. Relative to the
dated-corpus target $q_{Q,t}$, this distinguishes an
\emph{information gap} between $q_{Q,t}$ and $q^H_{Q,t}$ from a
\emph{reporting gap} between $q^H_{Q,t}$ and $p_{Q,t}$.

\section{Method}
\label{sec:method}

\begin{figure}[!t]
      \centering
      \includegraphics[width=\linewidth]{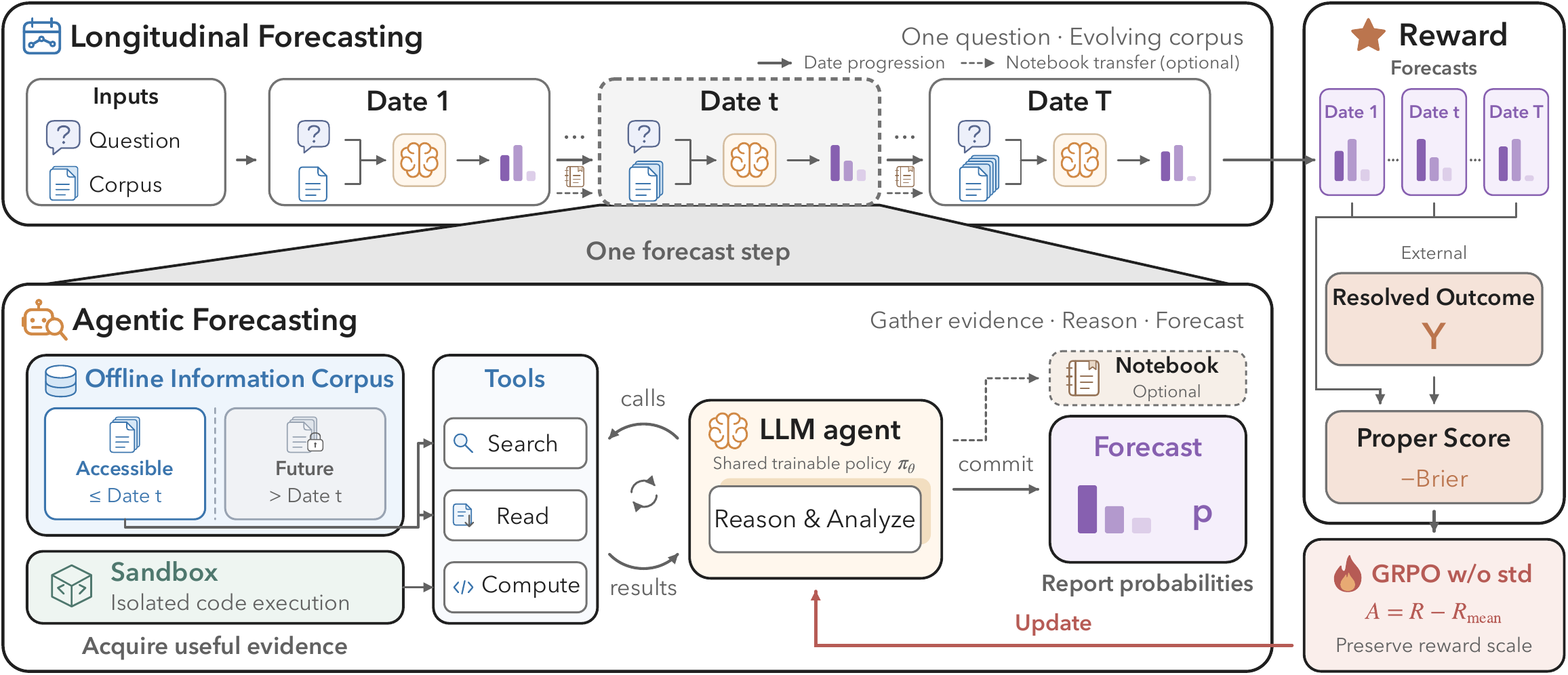}
      \vspace{-1.5em}
      \caption{
          \textbf{Overview of \ours.} One shared policy searches, reads, and reasons over the corpus available at each forecast date before reporting probabilities. Each report is scored against the eventual outcome, which is hidden during rollout. Training uses per-date feedback with mean-centered GRPO; a notebook optionally carries context between dates.
      }
      \vspace{-1.2em}
       \label{fig:overview}
\end{figure}
\subsection{RL Environment for Longitudinal Forecasting}
\label{sec:environment}

We train a single policy to acquire evidence and report probabilities across
multiple pre-resolution dates, using the eventual outcome as feedback at
each date. Following Forecast-Dojo~\citep{forecastdojo}, the agent gathers and reasons over evidence at each date before reporting a forecast, optionally carrying a belief notebook forward.
\vspace{-1em}
\paragraph{Agent loop within a forecast step.}
Forecast step $t$ begins with the question $Q$, the forecast date $\tau_t$,
and the belief notebook $M_{t-1}$ from the preceding step when external memory is enabled. The agent then interleaves reasoning with three research capabilities for
gathering and analyzing evidence:
\begin{itemize}[leftmargin=1.5em, itemsep=0.1em, topsep=-0.3em]
    \item \textsc{Search} returns the top-$k$ relevant article snapshots for an agent-generated query from the date-restricted corpus $\mathcal{I}_{\leq\tau_t}$ via \texttt{search(query,top-k)}.
    \item \textsc{Read} retrieves the full text of a previously found article, via \texttt{scrape(article\_id)}.
    \item \textsc{Compute} executes model-generated Python code for numerical analysis, aggregation, base-rate estimation, or simulation, via \texttt{python(code)}.
\end{itemize}
Each tool call result extends the agent's context and can prompt further reasoning or another call. Search and Read results from this step form $H_t$, which the agent analyzes alongside any incoming notebook. The agent completes the step by committing $p_t\in\Delta^{K-1}$. When external memory is enabled, the agent additionally produces an updated belief notebook $M_t$, which summarizes its current belief, unresolved questions, and accumulated evidence for use at the next forecast step.

\paragraph{Progression between forecast steps.}
After completing the forecast at $\tau_t$, the environment advances to the next scheduled date $\tau_{t+1}$, expanding the admissible information from $\mathcal{I}_{\leq\tau_t}$ to
$\mathcal{I}_{\leq\tau_{t+1}}$. When external memory is disabled, each forecast step starts without agent-generated
belief notebook from earlier dates. We refer to this as \emph{memory-free forecasting}. When external memory is enabled, the belief notebook $M_t$ is carried into the next forecast step, yielding \emph{memory-on forecasting}. 
We repeat this process through the final scheduled date $\tau_T$ to complete the episode.

\subsection{Training Rewards and Objective}
\label{sec:proper_objective}

\paragraph{Training reward.}
Our reward for forecast step $t$ is defined as a sum of a forecasting score
and a format reward:
\vspace{-0.15em}
\begin{equation}
    R_t
    =
    S(p_t,Y)
    +
    R_t^{\mathrm{format}},
    \qquad
    R_t^{\mathrm{format}}
    =
    \lambda_{\mathrm{ans}}\bigl(f_t^{\mathrm{ans}}-1\bigr)
    +
    \lambda_{\mathrm{tool}}\bigl(f_t^{\mathrm{tool}}-1\bigr).
    \label{eq:training_reward}
\end{equation}
Our main experiments use the negative Brier score
(Section~\ref{sec:proper_scoring}), while the GRPO normalization ablation
also considers the logarithmic score. Exact properness holds for the
ideal log score under finite expectation, but not for its probability-floored
implementation on the full simplex (Appendix~\ref{app:properness_scope}). The format reward enforces a valid interface: $f_t^{\mathrm{ans}}\in[0,1]$ checks the answer format and, when memory is enabled, the notebook structure and its agreement with the forecast, while $f_t^{\mathrm{tool}}\in[0,1]$ checks whether tool calls use names recognized by the environment. We provide full definitions, coefficients, and score implementation in Appendix~\ref{app:reward_implementation}.

\paragraph{Training objective.}
Let $\mathcal D$ denote the training distribution over resolved forecasting
questions. We maximize expected cumulative reward across forecast dates,
with expectation over policy rollouts:
\begin{equation}
    \underset{\theta}{\max}\;\underbrace{\mathbb{E}_{Q\sim\mathcal D}\!\left[\sum_{t=1}^{T_Q}R_{Q,t}\right]}_{J_{\mathrm{train}}(\theta)}
    =
    \underset{\theta}{\max}\left\{
    \underbrace{\mathbb{E}_{Q\sim\mathcal D}\!\left[\sum_{t=1}^{T_Q}S(p_{Q,t},Y_Q)\right]}_{J_{\mathrm{forecast}}(\theta)}
    +
    \underbrace{\mathbb{E}_{Q\sim\mathcal D}\!\left[\sum_{t=1}^{T_Q}R_{Q,t}^{\mathrm{format}}\right]}_{J_{\mathrm{format}}(\theta)}
    \right\}.
    \label{eq:longitudinal_objective}
\end{equation}
The format term satisfies $J_{\mathrm{format}}(\theta)\leq 0$ and vanishes for valid outputs. We assume there exists a policy that maximizes $J_{\mathrm{forecast}}$ while satisfying all format checks. Then, every maximizer of $J_{\mathrm{train}}$ is also forecasting-optimal, $\max_\theta J_{\mathrm{train}}(\theta)=\max_\theta J_{\mathrm{forecast}}(\theta)$. Appendix~\ref{app:format_reward} gives the proof.

Improvements in $J_{\mathrm{forecast}}$ can arise through two complementary
capabilities: acquiring relevant information and translating that information
into accurate probabilistic forecasts. Proposition~\ref{prop:longitudinal_properness}
shows that our objective captures both.

\begin{proposition}[\textbf{Properness of the agentic forecasting objective}]
\label{prop:longitudinal_properness}
Assume there is no outcome leakage beyond the admissible dated information, as formalized in Appendix~\ref{app:properness}.
Let $S$ be a strictly proper score with finite expected score. For the targets $q_{Q,t}$ and $q^H_{Q,t}$ defined in Section~\ref{sec:longitudinal_forecasting}, define
\vspace{-0.2em}
\begin{equation}
    D_S(q,p)\triangleq\mathbb{E}_{Y\sim q}[S(q,Y)-S(p,Y)],\qquad
    J^\star\triangleq\mathbb{E}_{Q\sim\mathcal D}\!\left[\sum_{t=1}^{T_Q}S(q_{Q,t},Y_Q)\right].
    \label{eq:forecast_regret_and_optimum}
\vspace{-0.2em}
\end{equation}
Here $D_S(q,p)$ is the expected proper-score discrepancy between $q$ and $p$.
$J^\star$ is the oracle forecasting value obtained by reporting the dated-corpus target $q_{Q,t}$ at every step.
Then
\begin{equation}
    \begin{aligned}
        \underset{\theta}{\arg\max}\;J_{\mathrm{forecast}}(\theta)
        &=\underset{\theta}{\arg\min}\;\bigl[J^\star-J_{\mathrm{forecast}}(\theta)\bigr] \\[-0.7em]
        &=\underset{\theta}{\arg\min}\;\mathbb{E}_{Q\sim\mathcal D}\!\left[\sum_{t=1}^{T_Q}\left(\underbrace{D_S(q_{Q,t},q^H_{Q,t})}_{\text{information gap}}+\underbrace{D_S(q^H_{Q,t},p_{Q,t})}_{\text{reporting gap}}\right)\right].
    \end{aligned}
    \label{eq:forecasting_gap_decomposition}
\end{equation}
A policy has zero total gap if and only if $p_{Q,t}=q^H_{Q,t}=q_{Q,t}$ at every forecast step, almost surely.
\end{proposition}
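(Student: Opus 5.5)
The plan is to prove the decomposition step by step, using the tower property together with the no-leakage assumption. Fix a question $Q$ and a step $t$. Let $\mathcal F_t$ be the $\sigma$-field generated by $(Q,\tau_t,\mathcal I_{\leq\tau_t})$ and $\mathcal G_t$ the one generated by $(Q,\tau_t,H_t,M_{t-1})$. We will formalize no outcome leakage to mean that $H_t$, $M_{t-1}$ and the policy's internal randomness are conditionally independent of $Y$ given $\mathcal F_t$. In other words, the agent only sees admissible dated information. Under this assumption $q^H_{Q,t}=\mathbb E[q_{Q,t}\mid\mathcal G_t]$, since $\mathcal G_t$ is (up to independent noise) coarser than $\mathcal F_t$. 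Moreover $p_{Q,t}$ is $\mathcal G_t$-measurable up to independent policy randomness, so $\Pr(Y=k\mid\mathcal G_t,p_{Q,t})=q^H_{Q,t}(k)$.

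\textbf{Step 1 (reporting gap).} Condition on $(\mathcal G_t,p_{Q,t})$. Then
$\mathbb E[S(q^H_{Q,t},Y)-S(p_{Q,t},Y)\mid\mathcal G_t,p_{Q,t}]=D_S(q^H_{Q,t},p_{Q,t})$, by the definition in equation~\ref{eq:forecast_regret_and_optimum}. Finite expected score makes every term integrable.

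\textbf{Step 2 (information gap).} Write $S(q_{Q,t},Y)-S(q^H_{Q,t},Y)$ and condition first on $\mathcal F_t$, which also carries the independent $H_t,M_{t-1}$. Given $\mathcal F_t$, $Y\sim q_{Q,t}$ and $q^H_{Q,t}$ is a fixed report. Hence the conditional expectation equals $D_S(q_{Q,t},q^H_{Q,t})$.

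\textbf{Step 3 (assembly).} Summing the identities from Steps 1 and 2, taking total expectation, summing over $t\le T_Q$ and averaging over $Q\sim\mathcal D$ gives $J^\star-J_{\mathrm{forecast}}(\theta)=\mathbb E[\sum_t(D_S(q_{Q,t},q^H_{Q,t})+D_S(q^H_{Q,t},p_{Q,t}))]$. $J^\star$ does not depend on $\theta$: it is defined via the corpus targets and the outcome alone, so finiteness is the only requirement. Therefore the argmax of $J_{\mathrm{forecast}}$ equals the argmin of the gap, which yields both equalities in equation~\ref{eq:forecasting_gap_decomposition}.

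\textbf{Step 4 (zero gap).} Strict properness (equation~\ref{eq:strict_properness}) gives $D_S(q,p)\ge0$ with equality iff $p=q$. A sum of nonnegative terms has zero expectation iff each term vanishes almost surely. So zero total gap holds iff $q^H_{Q,t}=q_{Q,t}$ and $p_{Q,t}=q^H_{Q,t}$ a.s. for every $t$. The converse direction is immediate.

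The main obstacle is Step 2, specifically the precise formalization of no leakage. It must guarantee two things: that conditioning on $\mathcal F_t$ makes $Y\sim q_{Q,t}$ even though $q^H_{Q,t}$ is computed from agent-selected evidence, and that $q^H_{Q,t}$ is genuinely the conditional law of $Y$ given what the agent used. Memory $M_{t-1}$ depends on earlier steps, but it is built only from $\mathcal I_{\le\tau_{t-1}}\subseteq\mathcal I_{\le\tau_t}$ and policy noise. I would handle this by induction over $t$, showing that $(H_s,M_s)_{s\le t}$ is $\mathcal F_t$-measurable up to $Y$-independent randomness.
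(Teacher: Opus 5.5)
There is a genuine gap in Step 1. Your Steps 2--4 match the paper's argument: conditioning on $(Q,\tau_t,\mathcal I_{\leq\tau_t},H_t,M_{t-1})$ for the information gap, policy-independence of $J^\star$, and nonnegativity of $D_S$. Step 1, however, needs a property that your formalization of no leakage does not deliver.

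You assume only that $(H_t,M_{t-1},U)$ is conditionally independent of $Y$ given $\mathcal F_t$. From this you assert that, because $p_{Q,t}$ is ``$\mathcal G_t$-measurable up to independent policy randomness,'' $\Pr(Y=k\mid\mathcal G_t,p_{Q,t})=q^H_{Q,t}(k)$. That does not follow:

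- The randomness that shapes the report is the same trajectory randomness that chose the queries.
- $H_t$ by definition excludes queries and reasoning.
- $H_t$ is produced jointly by the corpus and that randomness. Conditioning on it can make the randomness informative about the corpus, and hence about $Y$. Marginal independence of $U$ does not give $U\perp Y\mid\mathcal G_t$.

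Here is a concrete counterexample. Take $K=2$, $Y$ uniform, a corpus containing $X_1=Y$ and an independent fair coin $X_2$, and no memory. The agent draws $U\in\{1,2\}$ uniformly, issues query $U$, and observes $H_t=X_U$ (the content, not the query). It reports $\delta_{H_t}$ if $U=1$ and the uniform distribution if $U=2$.

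- Your condition holds trivially, because $Y$ is $\mathcal F_t$-measurable.
- $q^H_{Q,t}$ puts mass $3/4$ on $H_t$.
- Yet given $(H_t,p_{Q,t})$, the outcome is either known exactly or uniform, so $\Pr(Y\mid\mathcal G_t,p_{Q,t})\neq q^H_{Q,t}$.
- Under negative Brier, $J^\star-J_{\mathrm{forecast}}=1/4$, whereas the information and reporting gaps are $3/8$ and $1/8$.

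So the identity you assemble in Step 3 is false in this model.

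The paper does not derive this property. It imposes it as the second equality of its no-leakage condition, $\Pr(Y_Q=k\mid Q,\tau_{Q,t},H_{Q,t},M_{Q,t-1},p_{Q,t})=q^H_{Q,t}(k)$: the report carries no outcome information beyond the acquired context. You need to add this as a separate hypothesis, alongside your conditional-independence assumption (which implies the paper's first equality). With it, your Step 1 becomes exactly the paper's reporting-gap identity, and the rest of your proof goes through as written.
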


Under the assumption above, the auxiliary format reward does not alter the
forecasting optimum. We therefore analyze $J_{\mathrm{forecast}}$ directly. Proposition~\ref{prop:longitudinal_properness} shows that maximizing \(J_{\mathrm{forecast}}\) is exactly equivalent to minimizing the expected sum of the information and reporting gaps. This decomposition identifies two distinct sources of suboptimality, but does not imply that joint optimization reduces each separately: gains may come from better information acquisition, better reporting, or both. For a fixed acquired context, strict properness favors \(p_{Q,t}=q^H_{Q,t}\), while improved evidence acquisition can reduce the information gap. We provide the full proof of Proposition~\ref{prop:longitudinal_properness} in Appendix~\ref{app:properness}.

\subsection{Policy Optimization: GRPO for Calibrated Forecasting}
\label{sec:calibration_grpo}
We use GRPO for policy optimization but omit its within-group
standard-deviation normalization. This no-std variant has been adopted in
recent forecasting work~\citep{outcomeRL,openForecaster} and also appears
in Dr.~GRPO~\citep{liu2025understandingr1zeroliketrainingcritical} in a
different optimization setting. \citet{bereket2025uncalibratedreasoninggrpoinduces}
show that standardization can induce overconfidence for binary stochastic
outcomes. We extend this analysis to multi-class proper scoring rules:
at a fixed acquired context, mean-centering preserves the expected
score-only reporting gradient up to a positive factor, whereas
standardization can reweight outcome classes and shift the elicited
probability target.

\paragraph{Vanilla GRPO.}
GRPO compares multiple rollouts in a group and constructs relative advantages
without a learned value function. For rewards $\{R^{(i)}\}_{i=1}^{G}$, the
standard normalized advantage is
\vspace{-0.3em}
\begin{equation}
    A_{\mathrm{std}}^{(i)}
    =
    \frac{R^{(i)}-\bar R}{\sigma_R+\epsilon},
    \qquad
    \bar R=\frac{1}{G}\sum_{j=1}^{G}R^{(j)},
    \label{eq:standard_grpo}
    \vspace{-0.3em}
\end{equation}
where $\sigma_R$ is the within-group standard deviation and $\epsilon>0$
ensures numerical stability.

\paragraph{Removing reward standardization.}
Vanilla GRPO both mean-centers and standardizes group rewards.
Mean-centering provides a relative baseline, whereas standardization
additionally removes reward scale. In probabilistic forecasting, this
can distort the proper-score target because the normalization scale
depends on both the current policy and the realized outcome, rescaling
different outcome classes differently. We therefore retain mean-centering
but remove std normalization:
\begin{equation}
    A_{Q,t}^{(i)}
    =
    R_{Q,t}^{(i)}-\bar R_{Q,t},
    \qquad
    \bar R_{Q,t}=\frac{1}{G}\sum_{j=1}^{G}R_{Q,t}^{(j)}.
    \label{eq:no_std_advantage}
\end{equation}
The following proposition makes the distinction precise.

\begin{proposition}[\textbf{Proper-score targets under GRPO normalization}]
\label{prop:grpo_std}
Fix a question $Q$, forecast date $\tau_t$, and acquired context $(H_{Q,t},M_{Q,t-1})$,
with $q^H_{Q,t}\in\operatorname{int}\Delta^{K-1}$ and $K\geq 2$.
Let $S$ be a strictly proper scoring rule. Under the regularity conditions
of Appendix~\ref{app:grpo_centering}, draw $G>1$ independent reports from
the current policy, independently of the common outcome conditional on
$(Q,\tau_t,H_{Q,t},M_{Q,t-1})$. For this proposition, let
$R_{Q,t}^{(i)}=S(p_{Q,t}^{(i)},Y_Q)$, excluding format penalties.

\textnormal{(i)}
For the mean-centered advantage in Eq.~\ref{eq:no_std_advantage}, the
expected score-function update equals $\left(1-\frac{1}{G}\right)$ times
the corresponding uncentered update. Thus mean-centering preserves the
expected score-only reporting gradient direction.

\textnormal{(ii)}
Let $\sigma^{S}_{Q,t,k}$ denote the standard deviation of
$S(p_{Q,t},k)$ under the current policy's induced distribution over reports.
For fixed $\epsilon>0$, as $G\to\infty$, the expected standardized update
is locally equivalent, up to a positive factor, to the proper-score update
associated with
\begin{equation}
    \widetilde q^H_{Q,t}(k)
    =
    \frac{q^H_{Q,t}(k)/(\sigma^{S}_{Q,t,k}+\epsilon)}
         {\sum_{j=1}^{K}q^H_{Q,t}(j)/(\sigma^{S}_{Q,t,j}+\epsilon)}.
    \label{eq:tilted_target}
\end{equation}
Therefore $\widetilde q^H_{Q,t}=q^H_{Q,t}$ if and only if
$\sigma^{S}_{Q,t,1}=\cdots=\sigma^{S}_{Q,t,K}$; otherwise,
standard-deviation normalization induces a policy-dependent tilted
reporting target.
\end{proposition}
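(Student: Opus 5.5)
Throughout, write $\pi_\theta$ for the current policy at the fixed context $c=(Q,\tau_t,H_{Q,t},M_{Q,t-1})$. Write $s^{(i)}=\nabla_\theta\log\pi_\theta(o^{(i)}\mid c)$ for the score function of rollout $i$, and $q=q^H_{Q,t}$. The expected update for an advantage $A^{(i)}$ is $\mathbb{E}\bigl[\frac1G\sum_i A^{(i)}s^{(i)}\bigr]$. The expectation is over the $G$ i.i.d.\ rollouts and the common outcome $Y\sim q$, which is independent of the rollouts given $c$. I will lean on two facts. The score function has zero mean, $\mathbb{E}[s^{(i)}]=0$, which is where the regularity conditions of Appendix~\ref{app:grpo_centering} enter (differentiation under the integral, finite second moments). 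And the uncentered update equals $\nabla_\theta\,\mathbb{E}_{p\sim\pi_\theta}\mathbb{E}_{Y\sim q}[S(p,Y)]$. By strict properness (Eq.~\ref{eq:strict_properness}), this is the gradient of an objective whose reporting optimum is $p=q$.

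\textbf{Part (i).} Expand $A^{(i)}s^{(i)}=(1-\frac1G)R^{(i)}s^{(i)}-\frac1G\sum_{j\neq i}R^{(j)}s^{(i)}$. Condition on $Y=k$. For $j\neq i$, rollouts $i$ and $j$ are independent, so $\mathbb{E}[R^{(j)}s^{(i)}\mid Y=k]=\mathbb{E}[S(p^{(j)},k)]\,\mathbb{E}[s^{(i)}]=0$. Averaging over $k\sim q$ and over $i$ gives exactly $(1-\frac1G)\,\mathbb{E}_{Y\sim q}\mathbb{E}[S(p,Y)s]$. This is a routine computation.

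\textbf{Part (ii).} Condition on $Y=k$. Then the $G$ rewards are i.i.d.\ copies of $S(p,k)$ with $p\sim\pi_\theta$. As $G\to\infty$, the law of large numbers (using finite variance) gives $\bar R\to\mu_k\triangleq\mathbb{E}[S(p,k)]$ and $\sigma_R\to\sigma^S_{Q,t,k}$. I would then show that the expected standardized update converges to
\[
\sum_k q(k)\,\frac{\mathbb{E}\bigl[(S(p,k)-\mu_k)\,s\bigr]}{\sigma^S_{Q,t,k}+\epsilon}
=\sum_k \frac{q(k)}{\sigma^S_{Q,t,k}+\epsilon}\,\nabla_\theta\,\mathbb{E}_{p\sim\pi_\theta}[S(p,k)],
\]
where the $\mu_k$ term drops out because the score has zero mean. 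The key observation is that $\epsilon>0$ keeps the denominator bounded away from zero, so $1/(\sigma_R+\epsilon)$ is a bounded continuous function of the empirical moments. Writing $Z=\sum_j q(j)/(\sigma^S_{Q,t,j}+\epsilon)>0$, the limit is $Z\,\nabla_\theta\,\mathbb{E}_{p\sim\pi_\theta}\mathbb{E}_{Y\sim\widetilde q}[S(p,Y)]$, with $\widetilde q$ as in Eq.~\ref{eq:tilted_target}. This is the proper-score update for target $\widetilde q$ up to the positive factor $Z$. The qualifier ``locally'' reflects that the $\sigma^S_{Q,t,k}$ depend on $\theta$ but are held frozen at the current policy, as is usual for first-order updates.

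\textbf{Main obstacle.} The hardest step is justifying the exchange of limit and expectation in the standardized ratio. The sample mean and sample std are built from the same rewards that multiply $s^{(i)}$, so the dependence does not factor for finite $G$. My plan is a leave-one-out argument. Replace $\bar R$ and $\sigma_R$ by their versions computed without rollout $i$; these are independent of $(R^{(i)},s^{(i)})$ given $Y$. The replacement changes each statistic by $O(1/G)$. Boundedness of $1/(\sigma+\epsilon)$ together with Cauchy--Schwarz (finite second moments of $S$ and $s$) then shows the error vanishes, and dominated convergence finishes the limit.

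\textbf{Final claim.} Suppose all $\sigma^S_{Q,t,k}$ are equal. Then the weights cancel in the normalization and $\widetilde q=q$. Conversely, suppose $\widetilde q=q$. Since $q\in\operatorname{int}\Delta^{K-1}$, we can divide by each $q(k)>0$, which gives $1/(\sigma^S_{Q,t,k}+\epsilon)=Z$ for every $k$, so all the $\sigma^S_{Q,t,k}$ are equal. Finally, strict properness makes $\widetilde q$ the unique maximizer of the tilted objective, so whenever the $\sigma^S_{Q,t,k}$ are unequal the elicited target moves to $\widetilde q\neq q$.
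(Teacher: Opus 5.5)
Your proof is correct. Part (i), the identification of the tilted objective with positive factor $Z$, and the equality characterization follow the paper essentially line for line. The paper uses a report-level score function $\psi_\theta(p)=\nabla_\theta\log\rho_\theta(p)$ where you use a trajectory-level $s^{(i)}$, which changes nothing in expectation.

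You genuinely differ from the paper in how you justify the limit--expectation interchange in part (ii).

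\textbf{The paper's route.} Its regularity conditions are a finite common report support, positive $C^1$ report probabilities, and finite scores. It writes the estimator exactly as $\bigl(G^{-1}\sum_i\psi_\theta(p^{(i)})R^{(i)}-\bar R\,G^{-1}\sum_i\psi_\theta(p^{(i)})\bigr)/(\widehat\sigma+\epsilon)$. It obtains the almost-sure limit conditional on $Y=k$ by applying the strong law to each empirical average. It then uses the deterministic bound $\|\widehat g_{\mathrm{std}}\|\le 2C_\psi C_S/\epsilon$, uniform in $G$ and in the sample because the support is finite, to apply dominated convergence. So the dependence between the group statistics and $(R^{(i)},s^{(i)})$, which you single out as the main obstacle, never matters there: almost-sure convergence of the whole ratio plus a uniform bound suffices, with no decoupling.

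\textbf{Your route.} The leave-one-out argument makes the leading term factor exactly as $\mathbb{E}\bigl[1/(\widehat\sigma_{-i}+\epsilon)\bigr]\,\mathbb{E}\bigl[s\,S(p,k)\bigr]$ and pushes all dependence into a perturbation term. This buys the result under only finite second moments of $S$ and $s$, so it would cover unbounded scores or a non-finite rollout space. The price is a more delicate error estimate.

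\textbf{A correction to your error estimate.} The change in the standard deviation is not uniformly $O(1/G)$. If the other $G-1$ rewards coincide, it is of order $|R^{(i)}-\bar R_{-i}|/\sqrt G$. You do not need a rate, however. Both $\widehat\sigma$ and $\widehat\sigma_{-i}$ converge almost surely to $\sigma^S_{Q,t,k}$, so the bounded factor $\bigl|1/(\widehat\sigma+\epsilon)-1/(\widehat\sigma_{-i}+\epsilon)\bigr|$ tends to zero. Cauchy--Schwarz together with uniform integrability of $(R^{(i)}-\bar R_{-i})^2$ then makes the error term vanish. Plain dominated convergence with a fixed dominating function is not quite enough here, because $\bar R_{-i}$ depends on $G$.
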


In this fixed-context, score-only analysis, the group mean preserves the expected reporting gradient, whereas standardization reweights outcome classes according to their reward scales in the large-group limit. This distortion need not always take the form of overconfidence; depending on the outcome-wise scales, it may sharpen the distribution, flatten it, or shift probability across classes. Appendix~\ref{app:grpo_centering} proves part~(i), and Appendix~\ref{app:std_properness} derives the multiclass effective target in part~(ii). This result motivates our normalization choice but characterizes only the local score-only reporting update, not calibration of the complete trained agent.
\vspace{-0.4em}
\paragraph{Group construction.}
For memory-free forecasting, we use the standard GRPO construction: each group contains $G$ rollouts for the same question and forecast date $(Q,\tau_t)$. Memory-on forecasting differs because the forecast at $\tau_t$ depends on the belief notebook accumulated at earlier dates. We therefore sample $G$ complete episodes for the same question and form the group at each date from the corresponding forecast steps across episodes. Each rollout carries its own notebook trajectory. We use the immediate reward at each forecast date without additional cross-date credit assignment. Thus, for memory-on episodes, the update is a per-date surrogate rather than the full episode-return gradient: earlier notebook actions may affect later forecasts but are not explicitly assigned those future rewards. Cross-date credit assignment remains future work.

\vspace{-0.4em}
\section{Experiments}
\label{sec:exp}
\vspace{-0.4em}
\subsection{Experimental Setup}
\label{sec:experimental_setup}
\paragraph{Training setup.}
We instantiate \ours with GLM-4.5-Air (106B-A12B)~\citep{glm45air},
whose public checkpoint was released on \texttt{07-28-2025}.
We train on the Forecast-Dojo training split~\citep{forecastdojo},
comprising 1,028 binary and multi-option events (3,965 event--date tasks) from Polymarket\footnote[1]{https://polymarket.us}, with forecast dates spanning \texttt{08-01-2025}--\texttt{02-28-2026}. Because all training events postdate the checkpoint release, their realized outcomes cannot be directly memorized from its pretraining data. We use a disjoint held-out validation set of events closing between \texttt{01-01-2026} and \texttt{02-28-2026} for checkpoint selection and hyperparameter tuning. We evaluate three variants: \emph{single-date}, which uses one forecast date for each event (Section \ref{sec:longitudinal}); \emph{memory-free}, which uses multi-date training without cross-date memory; and \emph{memory-on}, which carries
the rolling belief state across forecast dates. Following Forecast-Dojo's protocol, we curate a temporally indexed CC-News corpus spanning \texttt{03-2025} through \texttt{02-2026}, restricting retrieval at each forecast date to articles available by that date. We optimize negative Brier score together with the format reward defined in Section~\ref{sec:proper_objective}. Full implementation details are provided in Appendix~\ref{app:implementation}.
\begin{table*}[!htb]
\captionsetup[table]{skip=3pt}
\centering
\small
\setlength{\tabcolsep}{2.36pt}
\renewcommand{\arraystretch}{1.12}
\renewcommand{\modelcell}[1]{#1}
\definecolor{cigray}{gray}{0.52}
\RenewDocumentCommand{\metricci}{m >{\SplitArgument{1}{,}}m}{%
  \mainmetricci{#1}#2%
}
\makeatletter
\NewDocumentCommand{\mainmetricci}{m m m}{%
  \begin{tabular}[t]{@{}c@{}}
    #1\\[-4pt]
    \noalign{\global\let\CT@row@color\relax}%
    {\scriptsize\color{cigray}#2--#3}
  \end{tabular}%
}
\makeatother

\begin{threeparttable}
\caption{\textbf{Main results.}
Bold and underlined values mark the best and second-best point estimates,
respectively, within the GLM-4.5-Air block.
Gray ranges give 95\% confidence intervals
(event-clustered bootstrap for Forecast-Dojo; item bootstrap for FutureX).}
\label{tab:main}
\definecolor{FDLightBand}{HTML}{EFF3F6}
\begin{tabularx}{\textwidth}{
    >{\hsize=1.55\hsize\linewidth=\hsize\raggedright\arraybackslash}X
    *{5}{>{\hsize=0.89\hsize\linewidth=\hsize\centering\arraybackslash}X}
}

\toprule

&
\multicolumn{3}{c}{\textbf{Forecast-Dojo}}
&
\multicolumn{2}{c}{\textbf{FutureX}}
\\

\cmidrule(lr){2-4}
\cmidrule(lr){5-6}

\textbf{Model}
& Acc. (\%)\,$\uparrow$
& Brier\,$\downarrow$
& $-$Info-$\alpha$\,$\downarrow$
& Acc. (\%)\,$\uparrow$
& Brier\,$\downarrow$
\\

\midrule

Uniform Reference
& \metricci{22.9}{20.8, 25.1}
& \metricci{0.771}{0.749, 0.792}
& \metricci{0.727}{0.624, 0.830}
& \metricci{30.6}{28.7, 32.4}
& \metricci{0.680}{0.662, 0.699}
\\

\modelcell{Claude Sonnet 4.6$_\text{{\scriptsize\color{black}high}}$}
& \metricci{50.0}{44.5, 55.5}
& \metricci{0.624}{0.573, 0.675}
& \metricci{0.364}{0.253, 0.480}
& \metricci{58.9}{54.2, 63.6}
& \metricci{0.497}{0.444, 0.552}
\\

\modelcell{Kimi K2.5}
& \metricci{48.9}{43.8, 54.3}
& \metricci{0.663}{0.614, 0.710}
& \metricci{0.530}{0.414, 0.650}
& \metricci{57.5}{53.0, 61.9}
& \metricci{0.514}{0.464, 0.564}
\\

\modelcell{Qwen3.5-397B-A17B$_\text{{\scriptsize\color{black}think}}$}
& \metricci{48.2}{43.1, 53.5}
& \metricci{0.653}{0.606, 0.704}
& \metricci{0.482}{0.373, 0.608}
& \metricci{57.1}{52.5, 61.5}
& \metricci{0.556}{0.503, 0.607}
\\

\modelcell{Claude Sonnet 4.6$_\text{{\scriptsize\color{black}low}}$}
& \metricci{47.6}{41.6, 53.2}
& \metricci{0.656}{0.598, 0.714}
& \metricci{0.468}{0.346, 0.604}
& \metricci{54.2}{49.4, 59.1}
& \metricci{0.564}{0.506, 0.619}
\\

\modelcell{MiniMax-M2.5}
& \metricci{45.9}{40.9, 51.2}
& \metricci{0.719}{0.670, 0.768}
& \metricci{0.738}{0.608, 0.873}
& \metricci{53.3}{48.8, 57.7}
& \metricci{0.555}{0.505, 0.606}
\\

\midrule
\addlinespace[1pt]

\modelcell{GLM-4.5-Air}
& \metricci{41.6}{36.6, 46.9}
& \metricci{0.731}{0.685, 0.777}
& \metricci{0.726}{0.616, 0.877}
& \metricci{51.3}{47.0, 55.5}
& \metricci{0.612}{0.561, 0.664}
\\

\modelcell{\ \ +BLF}
& \metricci{45.8}{40.5, 51.4}
& \metricci{0.671}{0.630, 0.711}
& \metricci{0.458}{0.365, 0.555}
& \metricci{51.1}{45.9, 56.2}
& \metricci{0.592}{0.547, 0.639}
\\

\modelcell{\ \ +OpenForecaster}
& \metricci{42.8}{38.0, 48.1}
& \metricci{0.765}{0.719, 0.809}
& \metricci{0.845}{0.742, 0.957}
& \metricci{52.6}{48.5, 56.7}
& \metricci{0.583}{0.545, 0.622}
\\

\rowcolor{ours}
\modelcell{\ \ +\textbf{\ours}$_\text{{\scriptsize\color{black}single-date}}$}
& \metricci{47.2}{42.1, 52.3}
& \metricci{0.668}{0.619, 0.714}
& \metricci{0.507}{0.406, 0.619}
& \metricci{53.2}{48.7, 57.7}
& \metricci{\underline{0.540}}{0.494, 0.590}
\\

\rowcolor{ours}
\modelcell{\ \ +\textbf{\ours}$_\text{{\scriptsize\color{black}memory-on}}$}
& \metricci{\underline{49.3}}{43.7, 54.7}
& \metricci{\underline{0.630}}{0.577, 0.685}
& \metricci{\underline{0.413}}{0.313, 0.516}
& \metricci{\underline{55.2}}{50.4, 59.8}
& \metricci{0.545}{0.499, 0.592}
\\

\rowcolor{ours}
\modelcell{\ \ +\textbf{\ours}$_\text{{\scriptsize\color{black}memory-free}}$}
& \metricci{\textbf{50.5}}{45.3, 55.9}
& \metricci{\textbf{0.626}}{0.574, 0.677}
& \metricci{\textbf{0.399}}{0.304, 0.502}
& \metricci{\textbf{57.9}}{53.4, 62.4}
& \metricci{\textbf{0.528}}{0.483, 0.576}
\\

\bottomrule

\end{tabularx}
\end{threeparttable}
\vspace{-1.2em}
\end{table*}

\vspace{-0.3em}
\paragraph{Evaluation benchmarks.}
We evaluate on two forecasting benchmarks covering March through May 2026.
\emph{Forecast-Dojo} uses its held-out evaluation events from this period (230 events, 797 event--date pairs), while for \emph{FutureX}~\citep{futurex} we evaluate on L1--L2 questions (343 events), retaining its question-specific answer interface within our controlled offline-news setting. We keep the search infrastructure and temporally indexed corpus the same as training, extended through May 2026, with retrieval restricted to information available by each forecast date. Appendix~\ref{app:eval_data} provides further details on the characteristics and distribution of the evaluation data.

\vspace{-0.3em}
\paragraph{Baselines.}
We include a uniform reference that assigns probability $1/K$ to each of
the $K$ answer options, providing a non-informative baseline. We compare against the untrained GLM-4.5-Air base model, which isolates
the effect of forecasting-specific training. We additionally compare
against two forecasting-specific alternatives built on the same
backbone: BLF~\citep{blf}, a forecasting harness method, and OpenForecaster~\citep{openForecaster}, an RL-based forecasting training method. To contextualize performance beyond same-backbone comparisons, we also evaluate a set of larger and stronger off-the-shelf models---Claude Sonnet 4.6~\citep{anthropic2026claudesonnet46}, Kimi K2.5~\citep{kimi}, Qwen3.5-397B-A17B~\citep{qwen2026qwen35}, MiniMax-M2.5~\citep{minimax2026m25}---using the same agent loop and tool budget. We treat these models as references rather than controlled baselines. For every off-the-shelf model, either its officially reported knowledge cutoff or the checkpoint release date was before the evaluation-event period (Appendix \ref{app:knowledge_cutoff}).

\vspace{-0.6em}
\paragraph{Metrics.}
We report \textbf{Accuracy}: top-1 accuracy on Forecast-Dojo and the
benchmark's official answer-matching accuracy on FutureX
(Appendix~\ref{app:experimental_statistics}).
We also report the multiclass \textbf{Brier score}
\(
    \mathrm{Brier}(p,y)
    = \sum_{k=1}^{K}\bigl(p_k-\mathbbm{1}[k=y]\bigr)^2,
\)
where lower is better. We also report \textbf{Information Alpha}
(Info-$\alpha$) for Forecast-Dojo, the log-score gain over the crowd
forecast $p^{\mathrm{crowd}}$,
\(
    \mathrm{Info}\text{-}\alpha
    = \log p_y-\log p^{\mathrm{crowd}}_y.
\)
Positive values favor the model. Table~\ref{tab:main} reports
$-\mathrm{Info}\text{-}\alpha$ so that lower is better. For the GRPO ablation,
we additionally use the Murphy decomposition
\(
    \mathrm{Brier}
    = \mathrm{REL}
    - \mathrm{RES}
    + \mathrm{UNC}
\)
~\citep{murphy1973new} to distinguish reliability from resolution.
\vspace{-0.2em}

\begin{table*}[!htb]
\captionsetup[table]{skip=3pt}
\centering
\small
\setlength{\tabcolsep}{4pt}
\renewcommand{\arraystretch}{1.00}
\definecolor{cigray}{gray}{0.52}
\RenewDocumentCommand{\metricci}{m >{\SplitArgument{1}{,}}m}{%
  \gainmetricci{#1}#2%
}
\makeatletter
\NewDocumentCommand{\gainmetricci}{m m m}{%
  \begin{tabular}[t]{@{}c@{}}
    #1\\[-4pt]
    \noalign{\global\let\CT@row@color\relax}%
    {\scriptsize\color{cigray}#2--#3}
  \end{tabular}%
}
\makeatother
\begin{threeparttable}

\caption{\textbf{Performance gain analysis.} Entries are Brier scores ($\downarrow$). Gray ranges denote 95\% event-clustered bootstrap confidence intervals. $^{*}p<.05$, $^{**}p<.01$, and $^{***}p<.001$.}

\label{tab:gain_analysis}
\begin{tabularx}{\textwidth}{
    >{\hsize=1.30\hsize\linewidth=\hsize\raggedright\arraybackslash}X
    *{3}{>{\hsize=0.90\hsize\linewidth=\hsize\centering\arraybackslash}X}
    >{\hsize=1.00\hsize\linewidth=\hsize\centering\arraybackslash}X
}

\toprule

&
\multicolumn{3}{c}{\textbf{Provided Evidence}}
&
\\

\cmidrule(lr){2-4}

\textbf{Reader}
& $H_{\text{Base}}$
& $H_{\text{\ours}}$
& None
& $\boldsymbol{\Delta_{\mathrm{acq}}}\uparrow$
\\

\midrule

\modelcell{GPT-5.6 Terra$_\text{{\scriptsize\color{black}xhigh}}$}
& \metricci{0.623}{0.576, 0.671}
& \metricci{0.612}{0.566, 0.659}
& \metricci{0.725}{0.673, 0.778}
& \metricsolo{\sig{+0.011}{*}}
\\

\modelcell{Claude Sonnet 4.6$_\text{{\scriptsize\color{black}high}}$}
& \metricci{0.649}{0.600, 0.701}
& \metricci{0.638}{0.590, 0.688}
& \metricci{0.763}{0.709, 0.815}
& \metricsolo{\sig{+0.012}{*}}
\\

\midrule

\modelcell{GLM-4.5-Air}
& \metricci{0.702}{0.661, 0.743}
& \metricci{0.686}{0.644, 0.727}
& \metricci{0.801}{0.755, 0.847}
& \metricsolo{\sig{+0.016}{**}}
\\

\rowcolor{ours}
\modelcell{\ \ +\ours{}$_\text{{\scriptsize\color{black}memory-free}}$}
& \metricci{0.642}{0.592, 0.694}
& \metricci{0.628}{0.575, 0.681}
& \metricci{0.765}{0.713, 0.816}
& \metricsolo{\sig{+0.014}{*}}
\\

\midrule

\textbf{$\boldsymbol{\Delta_{\mathrm{report}}}\uparrow$}
& \sig{+0.060}{***}
& \sig{+0.058}{***}
& \sig{+0.035}{**}
&
\\

\bottomrule

\end{tabularx}

\end{threeparttable}
\vspace{-0.4em}
\end{table*}

\begin{figure*}[!htb]
    \centering
    \begin{minipage}[b]{0.49\textwidth}
        \centering
        \includegraphics[width=\linewidth]{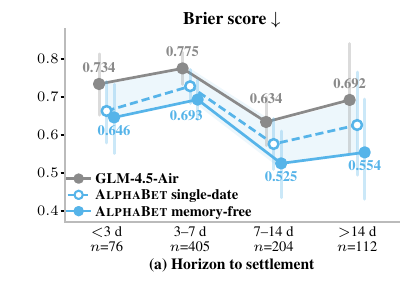}
        \phantomsubcaption
        \label{fig:horizon}
    \end{minipage}
    \hfill
    \begin{minipage}[b]{0.49\textwidth}
        \centering
        \includegraphics[width=\linewidth]{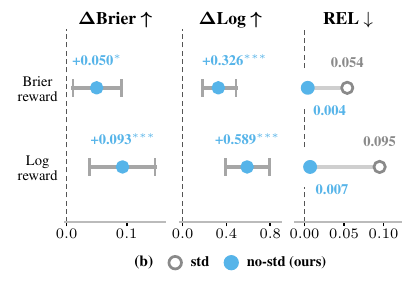}
        \phantomsubcaption
        \label{fig:grpo_summary}
    \end{minipage}
    \vspace{-2em}
    \caption{
    \textbf{(a)} Brier gain from longitudinal memory-free over single-date training across forecast horizons; positive values favor longitudinal training.
    \textbf{(b)} Final-checkpoint gains from removing GRPO standard-deviation normalization,
    $\Delta\mathrm{Brier}=\mathrm{Brier}_{\mathrm{std}}-\mathrm{Brier}_{\mathrm{no\text{-}std}}$ and
    $\Delta\mathrm{Log}=\mathrm{Log}_{\mathrm{no\text{-}std}}-\mathrm{Log}_{\mathrm{std}}$, so positive values favor no-std GRPO;
    the right strip shows Murphy reliability (REL) under each normalization.
    Bars are paired event-clustered 95\% CIs; $^{*}p<.05$, $^{***}p<.001$.
    }
    \label{fig:analysis}
    \vspace{-1em}
\end{figure*}

\subsection{Forecasting Performance}
\vspace{-0.4em}
\label{sec:main_results}

Table~\ref{tab:main} shows that \ours improves GLM-4.5-Air consistently across both benchmarks, including an 8.9-point accuracy gain and a Brier reduction from 0.731 to 0.626 on Forecast-Dojo. It also outperforms both the harness and RL training baselines. OpenForecaster gains less, likely because it is optimized on fixed retrieved context rather than through an agentic loop. At test time it searches and reads less than the base model (8.3 vs.\ 12.4 tool calls per forecast). Its reporting improvements thus operate on a thinner evidence base, reflecting the information gap in Proposition~\ref{prop:longitudinal_properness}.

\ours makes GLM-4.5-Air competitive with much stronger off-the-shelf forecasters, outperforming most open-weight baselines and approaching Claude Sonnet 4.6 (high) across both benchmarks. Interestingly, adding memory does not improve further: \ours{}$_{\text{memory-on}}$ is slightly worse than the memory-free variant on every metric, though still stronger than the single-date and base-model baselines on most metrics. A plausible explanation is that each forecast date is rewarded independently, so the notebook receives no direct credit for helping later forecasts. We leave explicit cross-date credit assignment to future work.

\vspace{-0.3em}
\subsection{Gain Analysis: Evidence Acquisition and Probability Reporting}
\label{sec:gain_analysis}
\vspace{-0.3em}
Proposition~\ref{prop:longitudinal_properness} separates forecasting quality into evidence acquisition and reporting, but an end-to-end agent entangles the two as it both gathers evidence and predicts from it. To disentangle them, we extract the article sets retrieved by the base model and by \ours, $H_{\text{Base}}$ and $H_{\ours}$, and ask tool-free readers to forecast from these frozen evidence sets under the same prompt and scoring protocol. With evidence fixed, swapping readers isolates reporting differences; with the reader fixed, swapping $H_{\text{Base}}$ and $H_{\ours}$ measures the downstream value of the acquired evidence. We repeat the latter comparison with GPT-5.6 Terra~\citep{openai2026gpt56} and Claude Sonnet 4.6 as independent readers to test whether $H_{\ours}$ is useful beyond \ours itself. We also include a question-only condition to control for each reader's default reporting bias in the absence of retrieved evidence. Appendix~\ref{app:gain_gap} gives the details of this analysis setup.
\begin{figure}[!htbp]
      \centering
      \includegraphics[width=\linewidth]{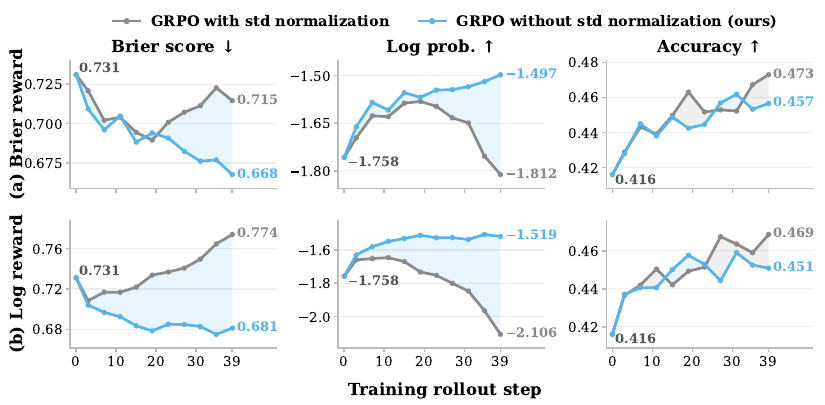}
      \vspace{-2em}
      \caption{Effect of GRPO reward standardization. Std normalization progressively degrades proper-score performance under both Brier and log-score rewards, while accuracy remains comparable.}
      \label{fig:grpo_std}
      \vspace{-1em}
\end{figure}

Table~\ref{tab:gain_analysis} shows gains from both evidence acquisition and reporting. Every reader, including the independently trained GPT-5.6 Terra and Claude Sonnet 4.6, performs better on $H_{\ours}$ than on $H_{\text{Base}}$ (0.011--0.016 Brier), indicating that \ours retrieves broadly more useful evidence rather than evidence tailored only to its own reader. Reporting also improves substantially: with the same evidence, \ours outperforms the base by 0.058--0.060 Brier. Its 0.035 advantage in the question-only condition accounts for part of this gap, while the additional gain with evidence suggests that \ours also extracts more value from what it reads.

\subsection{Longitudinal Supervision}
\label{sec:longitudinal}
\vspace{-0.3em}
To isolate the benefit of training across multiple dated states, we compare against \ours{}$_{\text{single-date}}$, which retains one fixed forecast date for each of the same 1,028 training events. We stratify events evenly across early, middle, and late relative forecast positions to avoid bias toward a particular horizon, yielding 1,028 training states versus 3,965 event--date states for longitudinal training. All other training settings and checkpoint selection are held fixed, giving both runs the same rollout/update budget at each step. Appendix~\ref{app:single_date} provides the full construction.

On Forecast-Dojo, \ours{}$_{\text{memory-free}}$ further improves over \ours{}$_{\text{single-date}}$, reducing Brier from 0.668 to 0.626 and raising accuracy from 47.2\% to 50.5\%. The same ordering holds on FutureX (Table~\ref{tab:main}). The longitudinal Brier gain is 0.072 beyond 14 days, 0.051 at 7--14 days, 0.035 at 3--7 days, and 0.017 within three days (Figure~\ref{fig:horizon}). The gain is positive across horizons and largest farther from settlement, suggesting that longitudinal training is especially helpful for earlier forecasts.

\subsection{Effect of GRPO Reward Standardization}
\label{sec:grpo_ablation}
\vspace{-0.4em}
Proposition~\ref{prop:grpo_std} shows that GRPO's within-group standard-deviation normalization can distort the reporting target induced by a proper scoring rule. We test this with controlled GLM-4.5-Air ablations under two proper scoring rules, Brier and log scores, differing only in whether reward standardization is applied. We re-evaluate every saved checkpoint on Forecast-Dojo and compare each std/no-std pair on their common-valid forecasts, isolating probability quality from output-validity effects. Appendix~\ref{app:grpo_std} provides the detailed setup and validity analysis.

Figure~\ref{fig:grpo_std} shows the same pattern under both objectives: standardization progressively worsens both proper scores, including the one not used for training, while top-1 accuracy remains broadly comparable. Final-checkpoint comparisons confirm significant proper-score gains from removing standardization, with no significant accuracy difference (Figure~\ref{fig:grpo_summary}). Murphy decomposition attributes the degradation primarily to reliability rather than resolution, indicating a distortion in probability reporting rather than outcome ranking. In our runs, this distortion manifests as excessive sharpening; additional diagnostics are provided in Appendix~\ref{app:grpo_std}.

\vspace{-0.6em}
\section{Related Work}
\label{sec:relate-work}
\vspace{-0.5em}

\paragraph{Event forecasting benchmarks and environments.}
ForecastQA and AutoCast study event prediction from time-restricted news~\citep{jin2021forecastqa,zou2022forecasting}.
ForecastBench and FutureX introduce live evaluations on unresolved questions, continuously incorporating newly arising events~\citep{forecastbench,futurex}.
More recent agentic environments make evidence gathering part of the forecasting process: MIRAI provides tools over historical events and news~\citep{ye2024mirai}, while FutureSim evaluates adaptation through chronological news replay~\citep{futuresim}.
Forecast-Dojo provides aligned forecast dates and temporally restricted research tools, and demonstrates supervised fine-tuning on agent trajectories~\citep{forecastdojo}.
Building on Forecast-Dojo, \ours{} turns these dated interactions into an outcome-based agentic RL training problem, using each event's eventual outcome to supervise decisions across multiple information states.
\vspace{-0.5em}
\paragraph{Learning to forecast with language models.}
Prior work trains forecasters through supervision on human forecasts or reasoning traces.
AutoCast++ combines news reranking and summarization with dated human forecasts~\citep{yan2024autocast}, while other approaches fine-tune on selected reasoning traces~\citep{approachHuman} or maintain explicit beliefs during iterative tool use~\citep{blf}.
Outcome-based RL instead learns from realized event outcomes~\citep{outcomeRL,turtel2026futureaslabel}, with OpenForecaster scaling this approach using news-derived questions and offline retrieval~\citep{openForecaster}.
However, existing outcome-based RL methods largely optimize isolated, single-date forecasts rather than the evolving research process of an agent.
In contrast, \ours{} introduces an agentic, longitudinal RL framework that jointly optimizes evidence acquisition and probability reporting across multiple dates of the same event, with proper-score feedback at each date.
\vspace{-0.5em}
\paragraph{Reward normalization and probabilistic calibration.}
Prior work studies probabilistic calibration through supervised fine-tuning~\citep{lin2022teaching}, reinforcement learning~\citep{band2024linguistic}, and proper-scoring objectives~\citep{gneiting2007strictly}.
However, using a proper scoring rule as a reward does not guarantee that the RL update preserves its elicitation incentives.
\citet{bereket2025uncalibratedreasoninggrpoinduces} show that GRPO's standard-deviation normalization~\citep{shao2024deepseekmath} can induce overconfidence for binary stochastic outcomes.
Removing this normalization has also been explored in Dr.~GRPO~\citep{liu2025understandingr1zeroliketrainingcritical} and forecasting RL~\citep{outcomeRL,openForecaster}.
\ours{} provides a theoretical analysis of how standard-deviation normalization shifts the probability-reporting target under multi-class proper scores, and empirically demonstrates the resulting degradation in probabilistic forecasting despite largely unchanged accuracy.
\section{Conclusion}
\label{sec:conclusion}
\vspace{-0.5em}
We introduced \ours{}, an outcome-based RL framework that trains a single LLM
agent to acquire evidence and report probabilities as information evolves.
Proper-score feedback allows each event's resolved outcome to supervise
forecasts across multiple pre-resolution dates. Our analysis decomposes forecasting regret into evidence-acquisition and probability-reporting gaps and explains how GRPO's standard-deviation normalization can distort the reporting target. Experiments on Forecast-Dojo and FutureX show consistent gains over single-date training, with controlled evidence--reader swaps identifying improvements in both capabilities.

\bibliography{ref}
\bibliographystyle{iclr2027_conference}

\appendix
\renewcommand{\thesection}{\Alph{section}}
\newpage

\begin{center}
	{\Large \textbf{Appendix for \ours}}
\end{center}

\startcontents[sections]
\printcontents[sections]{l}{1}{\setcounter{tocdepth}{2}}

\section{Theoretical Details}
\label{app:theory}

\subsection{Effect of the Format Reward}
\label{app:format_reward}

We prove the format-reward claim in
Section~\ref{sec:proper_objective}. Assume that the objectives are finite
and that there exists a forecasting-optimal policy $\pi_{\theta^\star}$
that passes every format check almost surely. Under this feasibility
assumption, the format reward preserves the optimal forecasting value;
the training optima are exactly the forecasting optima with zero
expected format penalty.

\begin{proof}
The coefficients in Eq.~\ref{eq:training_reward} are nonnegative, and
each format check takes values in $[0,1]$. Consequently,
$R_{Q,t}^{\mathrm{format}}\leq 0$ almost surely and
$J_{\mathrm{format}}(\theta)\leq 0$ for every $\theta$.
Since $\pi_{\theta^\star}$ passes all checks,
$J_{\mathrm{format}}(\theta^\star)=0$. Therefore, for every $\theta$,
\begin{equation}
\begin{aligned}
J_{\mathrm{train}}(\theta)
&\leq J_{\mathrm{forecast}}(\theta)
\leq J_{\mathrm{forecast}}(\theta^\star)
=J_{\mathrm{train}}(\theta^\star).
\end{aligned}
\label{eq:format_preserves_optimum}
\end{equation}
This proves
$\max_\theta J_{\mathrm{train}}(\theta)
=\max_\theta J_{\mathrm{forecast}}(\theta)$.

Now let $\widehat\theta$ be any maximizer of $J_{\mathrm{train}}$.
The two ends of Eq.~\ref{eq:format_preserves_optimum} are equal at
$\widehat\theta$, so both inequalities are equalities. Hence
$J_{\mathrm{forecast}}(\widehat\theta)
=J_{\mathrm{forecast}}(\theta^\star)$ and
$J_{\mathrm{format}}(\widehat\theta)=0$.
Conversely, any forecasting optimum with zero expected format penalty
attains this same optimal training value. Equivalently,
\[
\underset{\theta}{\arg\max}\,J_{\mathrm{train}}(\theta)
=
\left\{\theta\in
\underset{\vartheta}{\arg\max}\,J_{\mathrm{forecast}}(\vartheta)
:\ J_{\mathrm{format}}(\theta)=0\right\}.
\]
\end{proof}

This result preserves the forecasting optimum under the feasibility
assumption. It does not assert that the format term leaves gradients
unchanged or that every improvement in $J_{\mathrm{train}}$ improves
$J_{\mathrm{forecast}}$.

\subsection{Properness of the Agentic Forecasting Objective}
\label{app:properness}

We prove Proposition~\ref{prop:longitudinal_properness} for the targets
$q_{Q,t}$ and $q^H_{Q,t}$, discrepancy $D_S$, and optimal forecasting
value $J^\star$ defined in the main text.

\paragraph{Probability model and integrability.}
Expectations in this proof are population expectations over questions,
outcomes, dated evidence, and policy rollouts. The law of the question,
its scheduled dates, the corpus, and the outcome does not depend on the
forecasting policy. Thus $J^\star$ is policy-independent, although the
acquired context and its target may depend on the policy.
The training data provide an empirical estimate of this expectation;
the argument does not assume that rerunning a resolved question
resamples its outcome.
We make the finite-score requirement explicit by assuming, for every
policy under consideration,
\[
\mathbb E\!\left[
\sum_{t=1}^{T_Q}\left(
|S(q_{Q,t},Y_Q)|+|S(q^H_{Q,t},Y_Q)|+|S(p_{Q,t},Y_Q)|
\right)\right]<\infty.
\]
This ensures that the conditioning and summation steps below are valid.
For the logarithmic score, zero-probability outcomes contribute zero
to the corresponding conditional sums.

\paragraph{No-leakage conditions.}
We write $\tau_{Q,t}$ for the scheduled date $\tau_t$ of question $Q$.
The incoming notebook $M_{Q,t-1}$ is empty when memory is disabled.
The no-leakage assumption takes the following form, for every class
$k$ and forecast step, almost surely:
\begin{equation}
\begin{aligned}
\Pr(Y_Q=k\mid Q,\tau_{Q,t},\mathcal I_{\leq\tau_{Q,t}},H_{Q,t},M_{Q,t-1})
&=q_{Q,t}(k),\\
\Pr(Y_Q=k\mid Q,\tau_{Q,t},H_{Q,t},M_{Q,t-1},p_{Q,t})
&=q^H_{Q,t}(k).
\end{aligned}
\label{eq:no_leakage_conditions}
\end{equation}
The first equality requires both the current evidence and the incoming
notebook to carry no outcome information beyond the dated corpus.
The second requires the report to carry no additional outcome information
beyond these inputs. Reasoning, computation, and notebook updates may use both.
By the tower property and the first equality, 
\begin{equation}
\begin{aligned}
q^H_{Q,t}(k)
&=\Pr(Y_Q=k\mid Q,\tau_{Q,t},H_{Q,t},M_{Q,t-1})\\
&=\mathbb E\!\left[q_{Q,t}(k)\mid Q,\tau_{Q,t},H_{Q,t},M_{Q,t-1}\right].
\end{aligned}
\label{eq:context_target_relation}
\end{equation}

\begin{proof}[Proof of Proposition~\ref{prop:longitudinal_properness}]
Fix a policy. For the per-step calculation, fix a question $Q$ and a
scheduled step $t\leq T_Q$; expectations in the next three displays
are conditional on this question. We restore the outer question
expectation when summing the per-step identities.

\emph{Step 1: Split the score difference.}
Adding and subtracting $S(q^H_{Q,t},Y_Q)$ gives
\begin{equation}
\begin{aligned}
&\mathbb E[S(q_{Q,t},Y_Q)-S(p_{Q,t},Y_Q)]\\
&\quad=\mathbb E[S(q_{Q,t},Y_Q)-S(q^H_{Q,t},Y_Q)]
+\mathbb E[S(q^H_{Q,t},Y_Q)-S(p_{Q,t},Y_Q)].
\end{aligned}
\label{eq:score_gap_split}
\end{equation}

\emph{Step 2: Identify the two discrepancy terms.}
Conditional on
$(Q,\tau_{Q,t},\mathcal I_{\leq\tau_{Q,t}},H_{Q,t},M_{Q,t-1})$,
both $q_{Q,t}$ and $q^H_{Q,t}$ are fixed and the conditional law of
$Y_Q$ is $q_{Q,t}$. Hence, by iterated expectation,
\begin{equation}
\begin{aligned}
&\mathbb E[S(q_{Q,t},Y_Q)-S(q^H_{Q,t},Y_Q)]\\
&\quad=\mathbb E\!\left[
\sum_{k=1}^{K}q_{Q,t}(k)
\bigl(S(q_{Q,t},k)-S(q^H_{Q,t},k)\bigr)\right]\\
&\quad=\mathbb E[D_S(q_{Q,t},q^H_{Q,t})].
\end{aligned}
\label{eq:information_gap_identity}
\end{equation}
Likewise, conditional on $(Q,\tau_{Q,t},H_{Q,t},M_{Q,t-1},p_{Q,t})$,
both $q^H_{Q,t}$ and $p_{Q,t}$ are fixed and the conditional law of
$Y_Q$ is $q^H_{Q,t}$. Therefore,
\begin{equation}
\begin{aligned}
&\mathbb E[S(q^H_{Q,t},Y_Q)-S(p_{Q,t},Y_Q)]\\
&\quad=\mathbb E\!\left[
\sum_{k=1}^{K}q^H_{Q,t}(k)
\bigl(S(q^H_{Q,t},k)-S(p_{Q,t},k)\bigr)\right]\\
&\quad=\mathbb E[D_S(q^H_{Q,t},p_{Q,t})].
\end{aligned}
\label{eq:reporting_gap_identity}
\end{equation}

\emph{Step 3: Sum over the episode and characterize equality.}
Summing the preceding identities over $t\leq T_Q$ and averaging over
questions gives
\begin{equation}
\begin{aligned}
J^\star-J_{\mathrm{forecast}}(\theta)
=\mathbb E\!\left[\sum_{t=1}^{T_Q}
\left(D_S(q_{Q,t},q^H_{Q,t})
+D_S(q^H_{Q,t},p_{Q,t})\right)\right].
\end{aligned}
\label{eq:forecast_objective_gap}
\end{equation}
Since $J^\star$ is independent of $\theta$, maximizing
$J_{\mathrm{forecast}}$ is equivalent to minimizing the right-hand
side. This proves Eq.~\ref{eq:forecasting_gap_decomposition}.

Strict properness gives $D_S(q,p)\geq0$, with equality exactly when
$p=q$. The sum in Eq.~\ref{eq:forecast_objective_gap} is consequently
a nonnegative random variable. Its expectation is zero if and only
if the sum is zero almost surely, which, because every summand is
nonnegative, is equivalent to
\begin{equation}
q^H_{Q,t}=q_{Q,t},\qquad p_{Q,t}=q^H_{Q,t}
\quad\text{for every }t\leq T_Q\text{, almost surely}.
\label{eq:optimal_forecasting_conditions}
\end{equation}
Conversely, these equalities make every discrepancy zero. Thus a policy
attains the optimal forecasting value $J^\star$ if and only if
$p_{Q,t}=q^H_{Q,t}=q_{Q,t}$ at every scheduled step almost surely.
\end{proof}

\paragraph{Interpretation of the decomposition.}
Equation~\ref{eq:forecast_objective_gap} identifies the expected
\emph{sum} of the information and reporting gaps. An increase in
$J_{\mathrm{forecast}}$ decreases this sum, but need not decrease
both components separately or improve every date. The identity applies
to both memory-free and memory-on forecasting under the stated
no-leakage conditions. Together with Appendix~\ref{app:format_reward},
it also implies that every maximizer of $J_{\mathrm{train}}$ minimizes
this expected sum under the format-feasibility assumption.

\paragraph{Why one resolved outcome can supervise every forecast date.}
The same $Y_Q$ scores every step, but the proof uses conditional
expectation at each date and linearity across dates, not independent
labels across dates. Consequently, one resolved outcome can score
multiple dated forecasts without imposing the same target distribution
on them. Neither latent target needs to be observed to compute the
training score $S(p_{Q,t},Y_Q)$.

\subsection{Scope of the Properness Guarantee}
\label{app:properness_scope}

The strict-properness statements apply to the negative Brier score and
the ideal logarithmic score, subject to the integrability requirements
above. The negative Brier reward specified in the experimental setup
requires no probability floor. Indeed, expanding its expectation gives
\[
\mathbb E_{Y\sim q}[S_{\mathrm{Brier}}(p,Y)]
=-\|p\|_2^2+2\langle q,p\rangle-1,
\qquad
D_{S_{\mathrm{Brier}}}(q,p)=\|q-p\|_2^2.
\]
Thus the score is strictly proper on the full simplex, including its
boundary. The log-floor qualification below concerns the logarithmic
option, not negative-Brier training.

For the logarithmic option described in
Section~\ref{sec:proper_objective}, the ideal score is
$S_{\log}(p,Y)=\log p(Y)$. If the realized-outcome probability is
floored at $\varepsilon_{\mathrm{score}}=10^{-3}$, the score becomes
$\widetilde S_{\log}(p,Y)=\log\max\{p(Y),\varepsilon_{\mathrm{score}}\}$.
This floored score is not proper on the full simplex, as the following
counterexample shows.

\begin{proof}[Verification of the counterexample]
Let $q=(1-\delta,\delta)$, where
$0<\delta<\varepsilon_{\mathrm{score}}$, and compare truthful reporting
with the report $p^0=(1,0)$. Since
$1-\delta>\varepsilon_{\mathrm{score}}$,
\[
\begin{aligned}
\mathbb E_{Y\sim q}[\widetilde S_{\log}(p^0,Y)]
&=(1-\delta)\log 1+\delta\log\varepsilon_{\mathrm{score}},\\
\mathbb E_{Y\sim q}[\widetilde S_{\log}(q,Y)]
&=(1-\delta)\log(1-\delta)+\delta\log\varepsilon_{\mathrm{score}}.
\end{aligned}
\]
Subtracting yields
\begin{equation}
\mathbb E_{Y\sim q}\!\left[
\widetilde S_{\log}(p^0,Y)-\widetilde S_{\log}(q,Y)\right]
=-(1-\delta)\log(1-\delta)>0.
\label{eq:floored_log_counterexample}
\end{equation}
Thus the truthful report is not a maximizer of the floored score.
\end{proof}

The floored and ideal scores agree when $p(Y)\geq
\varepsilon_{\mathrm{score}}$, but this pointwise agreement does not
preserve the optimum on the full simplex. Accordingly,
Proposition~\ref{prop:longitudinal_properness} gives an exact
properness guarantee for the Brier score and the ideal log score,
not for the floored log score. The same qualification applies to the
strict-properness interpretation of
Proposition~\ref{prop:grpo_std}: the score-function identities
below remain valid for finite floored scores, but their interpretation
as uniquely eliciting a specified probability target requires a
strictly proper score.

\subsection{Mean-Centering Preserves the Proper-Score Update}
\label{app:grpo_centering}

We prove part~(i) of Proposition~\ref{prop:grpo_std}.
The setup and regularity conditions in this subsection also apply to
part~(ii). Fix $(Q,\tau_t,H_{Q,t},M_{Q,t-1})$ and its conditional outcome law
$q^H_{Q,t}$. This is a local reporting analysis: the context and its
outcome law are held fixed when differentiating the report policy.
We analyze the forecasting-score component, not the mixed reward
containing format penalties. Accordingly, the notation $R_{Q,t}^{(i)}$
in this subsection and Appendix~\ref{app:std_properness} is local to
this analysis and denotes $S(p_{Q,t}^{(i)},Y_Q)$, rather than the total
reward in Section~\ref{sec:proper_objective}. The format-feasibility
result does not remove format penalties from a gradient or a group
standard deviation.

\paragraph{Report policy and regularity.}
Let $\rho_{\vartheta,Q,t}(p\mid H_{Q,t},M_{Q,t-1})$ denote the induced probability
of report $p$, abbreviated as $\rho_\vartheta(p)$. Assume that, in a
neighborhood of the current parameter $\theta$, these probabilities
are continuously differentiable and positive on a common finite set
of reports $\mathcal P\subseteq\Delta^{K-1}$. The report vectors in
$\mathcal P$ are fixed, and $S(p,k)$ is finite for every
$p\in\mathcal P$ and class $k$. For the ideal log score, this requires
$p(k)>0$ on the report support.

The report-level score function is the logarithmic derivative of
this sampling distribution:
\begin{equation}
\psi_\theta(p)\triangleq\nabla_\theta\log\rho_\theta(p),
\qquad
\mathbb E_{p\sim\rho_\theta}[\psi_\theta(p)]
=\sum_{p\in\mathcal P}\nabla_\theta\rho_\theta(p)
=\nabla_\theta 1=0.
\label{eq:report_score_function}
\end{equation}
Define the local expected forecasting score, with $q^H_{Q,t}$ fixed,
by
\begin{equation}
\mathcal J_{Q,t}^{H}(\vartheta)
\triangleq\mathbb E_{p\sim\rho_\vartheta}\!\left[
\sum_{k=1}^{K}q^H_{Q,t}(k)S(p,k)\right].
\label{eq:local_reporting_objective}
\end{equation}
Differentiating the finite sum gives
\begin{equation}
\begin{aligned}
\nabla_\theta\mathcal J_{Q,t}^{H}(\theta)
&=\sum_{p\in\mathcal P}\nabla_\theta\rho_\theta(p)
\sum_{k=1}^{K}q^H_{Q,t}(k)S(p,k)\\
&=\mathbb E_{Y_Q\sim q^H_{Q,t},\,p\sim\rho_\theta}
[\psi_\theta(p)S(p,Y_Q)],
\end{aligned}
\label{eq:local_reporting_gradient}
\end{equation}
where the outcome and the sampled report are independent conditional
on the fixed context.

\paragraph{Group sampling and update.}
Draw $G>1$ reports independently from $\rho_\theta$, independently of
$Y_Q\sim q^H_{Q,t}$, and score them using this same outcome. Thus,
conditional on the fixed context, the joint sampling law is
$q^H_{Q,t}(k)\prod_{i=1}^{G}\rho_\theta(p_{Q,t}^{(i)})$.
For this proof, set
\begin{equation}
R_{Q,t}^{(i)}=S(p_{Q,t}^{(i)},Y_Q),
\qquad
\bar R_{Q,t}=\frac1G\sum_{j=1}^{G}R_{Q,t}^{(j)}.
\label{eq:grpo_group_rewards_appendix}
\end{equation}
All expectations in the remainder of this subsection average over
both the shared outcome and the sampled reports. Define the
on-policy score-function estimator
\begin{equation}
\widehat g_{\mathrm{mean}}
=\frac1G\sum_{i=1}^{G}\psi_\theta(p_{Q,t}^{(i)})
\bigl(R_{Q,t}^{(i)}-\bar R_{Q,t}\bigr).
\label{eq:mean_centered_update_appendix}
\end{equation}
The sampled rewards and group statistics are treated as constants
in this update; only the report-policy log probability is differentiated.

\begin{proof}[Proof of Proposition~\ref{prop:grpo_std}(i)]
For every $i$, Eq.~\ref{eq:local_reporting_gradient} implies
\[
\mathbb E[\psi_\theta(p_{Q,t}^{(i)})R_{Q,t}^{(i)}]
=\nabla_\theta\mathcal J_{Q,t}^{H}(\theta).
\]
For $i\neq j$, condition on the shared outcome before factoring the
expectation. Independence of the reports and
Eq.~\ref{eq:report_score_function} give
\begin{equation}
\begin{aligned}
\mathbb E[\psi_\theta(p_{Q,t}^{(i)})R_{Q,t}^{(j)}]
&=\sum_{k=1}^{K}q^H_{Q,t}(k)
\mathbb E[\psi_\theta(p_{Q,t}^{(i)})\mid Y_Q=k]
\mathbb E[R_{Q,t}^{(j)}\mid Y_Q=k]\\
&=0.
\end{aligned}
\label{eq:mean_centering_cross_terms}
\end{equation}
Expanding the group mean and separating the diagonal and off-diagonal
terms therefore yields
\begin{equation}
\begin{aligned}
\mathbb E[\widehat g_{\mathrm{mean}}]
&=\frac1G\sum_{i=1}^{G}
\mathbb E[\psi_\theta(p_{Q,t}^{(i)})R_{Q,t}^{(i)}]
-\frac1{G^2}\sum_{i=1}^{G}\sum_{j=1}^{G}
\mathbb E[\psi_\theta(p_{Q,t}^{(i)})R_{Q,t}^{(j)}]\\
&=\nabla_\theta\mathcal J_{Q,t}^{H}(\theta)
-\frac{G}{G^2}\nabla_\theta\mathcal J_{Q,t}^{H}(\theta)\\
&=\left(1-\frac1G\right)
\nabla_\theta\mathcal J_{Q,t}^{H}(\theta).
\end{aligned}
\label{eq:mean_centering_gradient}
\end{equation}
The expected uncentered update is
\[
\mathbb E\!\left[\frac1G\sum_{i=1}^{G}
\psi_\theta(p_{Q,t}^{(i)})R_{Q,t}^{(i)}\right]
=\nabla_\theta\mathcal J_{Q,t}^{H}(\theta).
\]
Therefore Eq.~\ref{eq:mean_centering_gradient} is exactly the
$1-1/G$ proportionality claimed in part~(i). Since $1-1/G>0$,
mean-centering preserves the expected proper-score update direction
for every finite $G>1$.
\end{proof}

\subsection{Standard-Deviation Normalization and the Tilted Target}
\label{app:std_properness}

We prove part~(ii) under the fixed-context setup and regularity
conditions of Appendix~\ref{app:grpo_centering}, with
$q^H_{Q,t}(k)>0$ for every class and fixed $\epsilon>0$.
All policy-dependent quantities below are evaluated at the current
parameter $\theta$.

\paragraph{Outcome-specific score scales and the sampled update.}
Define the mean and standard deviation of the score across reports
when outcome $k$ is realized:
\begin{equation}
\begin{aligned}
\mu_{Q,t,k}&\triangleq\mathbb E_{p\sim\rho_\theta}[S(p,k)],\\
\sigma^S_{Q,t,k}&\triangleq
\left(\mathbb E_{p\sim\rho_\theta}
[(S(p,k)-\mu_{Q,t,k})^2]\right)^{1/2}.
\end{aligned}
\label{eq:outcome_reward_scale}
\end{equation}
Thus $\sigma^S_{Q,t,k}$ measures score variability across reports,
not uncertainty about which outcome will occur. For a sampled group,
let
\begin{equation}
\widehat\sigma_{Q,t}^{(G)}
\triangleq\left[\frac1G\sum_{i=1}^{G}
(R_{Q,t}^{(i)}-\bar R_{Q,t})^2\right]^{1/2}
\label{eq:group_reward_std}
\end{equation}
and define the standardized score-function estimator
\begin{equation}
\widehat g_{\mathrm{std}}^{(G)}
=\frac1G\sum_{i=1}^{G}\psi_\theta(p_{Q,t}^{(i)})
\frac{R_{Q,t}^{(i)}-\bar R_{Q,t}}
{\widehat\sigma_{Q,t}^{(G)}+\epsilon}.
\label{eq:standardized_update_appendix}
\end{equation}
As before, the rewards and the group statistics are not differentiated.

\begin{proof}[Proof of Proposition~\ref{prop:grpo_std}(ii)]
\leavevmode\par
\smallskip

\noindent
\emph{Step 1: Obtain the conditional large-group limit.}
\par\nobreak
\noindent
Condition on $Y_Q=k$. The reports remain i.i.d. from $\rho_\theta$.
Using the identity
\[
    \bigl(\widehat\sigma_{Q,t}^{(G)}\bigr)^2
    =
    \frac{1}{G}\sum_{i=1}^{G}
    \bigl(R_{Q,t}^{(i)}\bigr)^2
    -
    \bar R_{Q,t}^{\,2},
\]
the strong law of large numbers gives
$\bar R_{Q,t}\to\mu_{Q,t,k}$ and
$\widehat\sigma_{Q,t}^{(G)}\to\sigma^{S}_{Q,t,k}$
almost surely as $G\to\infty$. It also yields
\begin{equation}
    \begin{aligned}
        \frac{1}{G}\sum_{i=1}^{G}
        \psi_\theta\bigl(p_{Q,t}^{(i)}\bigr)
        &\xrightarrow{\mathrm{a.s.}}
        \mathbb{E}_{p\sim\rho_\theta}
        \bigl[\psi_\theta(p)\bigr]
        = 0,
        \\[0.5em]
        \frac{1}{G}\sum_{i=1}^{G}
        \psi_\theta\bigl(p_{Q,t}^{(i)}\bigr)
        R_{Q,t}^{(i)}
        &\xrightarrow{\mathrm{a.s.}}
        \mathbb{E}_{p\sim\rho_\theta}
        \bigl[\psi_\theta(p)S(p,k)\bigr].
    \end{aligned}
    \label{eq:group_statistics_limit}
\end{equation}
Here the almost-sure limits follow from the strong law of large
numbers, and the equality to zero follows from the score-function
identity in Eq.~\ref{eq:report_score_function}.

The update can be written exactly as
\[
\widehat g_{\mathrm{std}}^{(G)}
=\frac{
G^{-1}\sum_i\psi_\theta(p_{Q,t}^{(i)})R_{Q,t}^{(i)}
-\bar R_{Q,t}\,G^{-1}\sum_i\psi_\theta(p_{Q,t}^{(i)})}
{\widehat\sigma_{Q,t}^{(G)}+\epsilon}.
\]
Since the limiting denominator is at least $\epsilon>0$,
\[
\widehat g_{\mathrm{std}}^{(G)}
\xrightarrow{\mathrm{a.s.}}
\frac{\mathbb E_{p\sim\rho_\theta}[\psi_\theta(p)S(p,k)]}
{\sigma^S_{Q,t,k}+\epsilon}
\quad\text{conditional on }Y_Q=k.
\]
No independence between the sample mean and sample standard deviation
is required.

\par\smallskip
\noindent
\emph{Step 2: Establish the expected update limit and outcome reweighting.}
\par\nobreak
\noindent
We first obtain a deterministic bound that is uniform in the group
size and the sampled reports. Under the regularity conditions in
Appendix~\ref{app:grpo_centering}, define
\[
    C_S\triangleq
    \max_{p\in\mathcal P,\,1\leq k\leq K}|S(p,k)|,
    \qquad
    C_\psi\triangleq
    \max_{p\in\mathcal P}\|\psi_\theta(p)\|,
\]
where $\|\cdot\|$ denotes the Euclidean norm.
The finite-support and finite-score assumptions imply $C_S<\infty$.
Moreover, positivity and differentiability of $\rho_\theta(p)$ give
\[
    \|\psi_\theta(p)\|
    =
    \frac{\|\nabla_\theta\rho_\theta(p)\|}{\rho_\theta(p)}
    <\infty
    \qquad\text{for every }p\in\mathcal P.
\]
Since $\mathcal P$ is finite, $C_\psi<\infty$ as well.
These constants may depend on the fixed context and current parameter
$\theta$, but not on $G$ or the sampled group.

For every group member, the definition of $C_S$ gives
$|R_{Q,t}^{(i)}|=|S(p_{Q,t}^{(i)},Y_Q)|\leq C_S$.
Consequently, the sample mean satisfies
\[
    |\bar R_{Q,t}|
    =
    \left|\frac{1}{G}\sum_{j=1}^{G}R_{Q,t}^{(j)}\right|
    \leq
    \frac{1}{G}\sum_{j=1}^{G}|R_{Q,t}^{(j)}|
    \leq C_S,
\]
and each centered reward is bounded by
\[
    |R_{Q,t}^{(i)}-\bar R_{Q,t}|
    \leq |R_{Q,t}^{(i)}|+|\bar R_{Q,t}|
    \leq 2C_S.
\]
Also, $\widehat\sigma_{Q,t}^{(G)}\geq0$, so the denominator is at
least the fixed positive constant $\epsilon$.
Applying the triangle inequality and these bounds yields, for every
$G$ and every sampled group,
\begin{equation}
    \begin{aligned}
        \bigl\|\widehat g_{\mathrm{std}}^{(G)}\bigr\|
        &=
        \left\|
        \frac{1}{G}\sum_{i=1}^{G}
        \psi_\theta\bigl(p_{Q,t}^{(i)}\bigr)
        \frac{R_{Q,t}^{(i)}-\bar R_{Q,t}}
             {\widehat\sigma_{Q,t}^{(G)}+\epsilon}
        \right\|
        \\[0.4em]
        &\leq
        \frac{1}{G}\sum_{i=1}^{G}
        \frac{
            \bigl\|\psi_\theta\bigl(p_{Q,t}^{(i)}\bigr)\bigr\|
            \,|R_{Q,t}^{(i)}-\bar R_{Q,t}|
        }{\widehat\sigma_{Q,t}^{(G)}+\epsilon}
        \\[0.4em]
        &\leq
        \frac{1}{G\epsilon}\sum_{i=1}^{G}
        \bigl\|\psi_\theta\bigl(p_{Q,t}^{(i)}\bigr)\bigr\|
        \,|R_{Q,t}^{(i)}-\bar R_{Q,t}|
        \\[0.4em]
        &\leq
        \frac{1}{G\epsilon}\sum_{i=1}^{G}C_\psi(2C_S)
        \\[0.4em]
        &=\frac{2C_\psi C_S}{\epsilon}.
    \end{aligned}
    \label{eq:standardized_update_domination}
\end{equation}
The final bound is finite and independent of $G$ and of both the
shared outcome and the sampled reports. Together with the
conditional almost-sure limit from Step~1, it permits dominated
convergence componentwise, giving, for each class $k$,
\[
    \lim_{G\to\infty}
    \mathbb E\!\left[
        \widehat g_{\mathrm{std}}^{(G)}\mid Y_Q=k
    \right]
    =
    \frac{
        \mathbb E_{p\sim\rho_\theta}
        [\psi_\theta(p)S(p,k)]
    }{\sigma^S_{Q,t,k}+\epsilon}.
\]
Finally, average over the shared outcome using its conditional law
$q^H_{Q,t}$. Since the number of classes is finite,
\begin{equation}
    \begin{aligned}
        \lim_{G\to\infty}
        \mathbb E[\widehat g_{\mathrm{std}}^{(G)}]
        &=
        \sum_{k=1}^{K}
        \frac{q^H_{Q,t}(k)}{\sigma^S_{Q,t,k}+\epsilon}
        \mathbb E_{p\sim\rho_\theta}
        [\psi_\theta(p)S(p,k)]
        \\[0.4em]
        &=
        \mathbb E_{p\sim\rho_\theta}\!\left[
            \psi_\theta(p)\sum_{k=1}^{K}
            \frac{q^H_{Q,t}(k)}{\sigma^S_{Q,t,k}+\epsilon}
            \bigl(S(p,k)-\mu_{Q,t,k}\bigr)
        \right].
    \end{aligned}
    \label{eq:standardized_update_limit}
\end{equation}
The second equality uses
$\mathbb E_{p\sim\rho_\theta}[\psi_\theta(p)]=0$:
for each fixed $k$, the baseline $\mu_{Q,t,k}$ is independent of the
sampled report $p$, so its score-function expectation vanishes.

\emph{Step 3: Identify the proper-score objective and its tilted reporting target.}
Define
\begin{equation}
Z_{Q,t}\triangleq\sum_{k=1}^{K}
\frac{q^H_{Q,t}(k)}{\sigma^S_{Q,t,k}+\epsilon},
\qquad
\widetilde q^H_{Q,t}(k)\triangleq
\frac{q^H_{Q,t}(k)/(\sigma^S_{Q,t,k}+\epsilon)}{Z_{Q,t}}.
\label{eq:tilted_target_appendix}
\end{equation}
The normalizer $Z_{Q,t}$ is finite and positive, and
$\widetilde q^H_{Q,t}$ is a probability distribution.
In the calculation below, both quantities are evaluated at the
current parameter $\theta$ and held fixed; differentiation with
respect to $\vartheta$ acts only on the report probabilities
$\rho_\vartheta(p)$.
Substituting Eq.~\ref{eq:tilted_target_appendix} into the first
line of Eq.~\ref{eq:standardized_update_limit} gives
\begin{equation}
    \begin{aligned}
        &\lim_{G\to\infty}
        \mathbb E[\widehat g_{\mathrm{std}}^{(G)}]
        \\[0.3em]
        &\quad=
        Z_{Q,t}\,
        \mathbb E_{p\sim\rho_\theta}\!\left[
            \psi_\theta(p)
            \sum_{k=1}^{K}
            \widetilde q^H_{Q,t}(k)S(p,k)
        \right]
        \\[0.5em]
        &\quad\overset{(a)}{=}
        Z_{Q,t}
        \sum_{p\in\mathcal P}
        \rho_\theta(p)\psi_\theta(p)
        \sum_{k=1}^{K}
        \widetilde q^H_{Q,t}(k)S(p,k)
        \\[0.5em]
        &\quad\overset{(b)}{=}
        Z_{Q,t}
        \sum_{p\in\mathcal P}
        \bigl(\nabla_\theta\rho_\theta(p)\bigr)
        \sum_{k=1}^{K}
        \widetilde q^H_{Q,t}(k)S(p,k)
        \\[0.5em]
        &\quad\overset{(c)}{=}
        Z_{Q,t}
        \left.
        \nabla_\vartheta
        \left[
            \sum_{p\in\mathcal P}
            \rho_\vartheta(p)
            \sum_{k=1}^{K}
            \widetilde q^H_{Q,t}(k)S(p,k)
        \right]
        \right|_{\vartheta=\theta}
        \\[0.5em]
        &\quad\overset{(d)}{=}
        Z_{Q,t}
        \left.
        \nabla_\vartheta
        \mathbb E_{p\sim\rho_\vartheta}\!\left[
            \sum_{k=1}^{K}
            \widetilde q^H_{Q,t}(k)S(p,k)
        \right]
        \right|_{\vartheta=\theta}.
    \end{aligned}
    \label{eq:tilted_target_gradient}
\end{equation}
Here $(a)$ expands the expectation over the finite report support;
$(b)$ uses
$\rho_\theta(p)\psi_\theta(p)=\nabla_\theta\rho_\theta(p)$;
$(c)$ interchanges differentiation and finite summation, with
the reports, scores, and target weights held fixed;
and $(d)$ rewrites the finite sum as an expectation.

Thus, the limiting expected update is a positive multiple of the
gradient of a frozen-target proper-score objective.
Strict properness makes $\widetilde q^H_{Q,t}$ its unique optimal
report on the full simplex, which differs from the acquired-context
target whenever $\widetilde q^H_{Q,t}\neq q^H_{Q,t}$.
This identifies the local reporting incentive, not attainability
by the restricted report policy or convergence of training.

\emph{Step 4: Characterize when standardization preserves the original reporting target.}
Since $q^H_{Q,t}(k)>0$ for every $k$,
\[
\begin{aligned}
\widetilde q^H_{Q,t}=q^H_{Q,t}
&\ \Longleftrightarrow\ 
\frac{1}{Z_{Q,t}(\sigma^S_{Q,t,k}+\epsilon)}=1
\quad\text{for every }k\\
&\ \Longleftrightarrow\ 
\sigma^S_{Q,t,k}+\epsilon=Z_{Q,t}^{-1}
\quad\text{for every }k.
\end{aligned}
\]
This requires all outcome-specific score scales to be equal.
Conversely, if their common value is $s$, then
$Z_{Q,t}=1/(s+\epsilon)$ and the effective target equals $q^H_{Q,t}$.
Therefore,
\begin{equation}
    \widetilde q^H_{Q,t}=q^H_{Q,t}
    \quad\Longleftrightarrow\quad
    \sigma^S_{Q,t,1}=\cdots=\sigma^S_{Q,t,K}.
    \label{eq:tilted_target_equality_condition}
\end{equation}
Thus, under the full-support assumption on $q^H_{Q,t}$,
standardization preserves the original reporting target in the
fixed-policy large-group limit if and only if all outcome-specific
score standard deviations are equal.
When they are equal, the limiting expected update is a common
positive rescaling of the original proper-score gradient.
Otherwise, $\widetilde q^H_{Q,t}\neq q^H_{Q,t}$, so the locally
elicited report differs from the acquired-context target.
This establishes target distortion; whether that distortion
sharpens the forecast is addressed separately below.
This completes the proof.

\end{proof}

\paragraph{Why standardization loses outcome-wise scale.}
The scale loss also has a finite-group interpretation. For constants
$a_k>0$ and $b_k\in\mathbb R$, define
\begin{equation}
S^\star(p,k)=a_kS(p,k)+b_k.
\label{eq:outcome_score_rescaling}
\end{equation}
Within a group whose shared outcome is $k$, write $R_i$, $\bar R$,
and $\widehat\sigma$ for its rewards, mean, and standard deviation;
use a superscript $\star$ for the corresponding transformed quantities.
Then $R_i^\star-\bar R^\star=a_k(R_i-\bar R)$ and
$\widehat\sigma^\star=a_k\widehat\sigma$. With $\epsilon=0$,
on any nondegenerate group with $\widehat\sigma>0$,
\begin{equation}
\frac{R_i^\star-\bar R^\star}{\widehat\sigma^\star}
=\frac{R_i-\bar R}{\widehat\sigma}.
\label{eq:scale_blindness}
\end{equation}
Yet the expected transformed score assigns weight
$q^H_{Q,t}(k)a_k$ to $S(p,k)$; unequal $a_k$ can therefore change
the reporting target while leaving these normalized advantages
unchanged. With $\epsilon>0$ the exact relation becomes
\[
\frac{R_i^\star-\bar R^\star}{\widehat\sigma^\star+\epsilon}
=\frac{R_i-\bar R}{\widehat\sigma+\epsilon/a_k},
\]
so exact scale invariance no longer holds. The positive-$\epsilon$
reporting target is characterized by the proof above, rather than
by this invariance observation alone.

\paragraph{When does the tilt sharpen the forecast?}
Target distortion alone does not imply overconfidence. A sufficient
condition for sharpening is the opposite ordering of outcome
probabilities and score scales. Suppose that, after relabeling the
classes,
\begin{equation}
q^H_{Q,t}(1)\geq\cdots\geq q^H_{Q,t}(K),
\qquad
\sigma^S_{Q,t,1}\leq\cdots\leq\sigma^S_{Q,t,K}.
\label{eq:scale_ordering}
\end{equation}
Under this condition, the effective target does not decrease the probability
ratio of any higher-ranked class to any lower-ranked class compared with
$q^H_{Q,t}$, and it majorizes this distribution.

\begin{proof}[Proof of the sharpening claim]
Set $w_k=(\sigma^S_{Q,t,k}+\epsilon)^{-1}$, the inverse scale of
outcome $k$. Then $w_1\geq\cdots\geq w_K>0$ and
$\widetilde q^H_{Q,t}(k)=q^H_{Q,t}(k)w_k/Z_{Q,t}$.
Both $q^H_{Q,t}(k)$ and $w_k$ are nonincreasing, so their product,
and hence $\widetilde q^H_{Q,t}(k)$, is also nonincreasing.
For $i<j$, positivity of the outcome probabilities gives
\begin{equation}
\frac{\widetilde q^H_{Q,t}(i)}{\widetilde q^H_{Q,t}(j)}
=\frac{q^H_{Q,t}(i)}{q^H_{Q,t}(j)}\frac{w_i}{w_j}
\geq\frac{q^H_{Q,t}(i)}{q^H_{Q,t}(j)}.
\label{eq:tilted_odds_sharpen}
\end{equation}
For every $1\leq m<K$, using $\sum_k q^H_{Q,t}(k)=1$ gives
\begin{equation}
\begin{aligned}
&Z_{Q,t}\left(\sum_{i=1}^{m}\widetilde q^H_{Q,t}(i)
-\sum_{i=1}^{m}q^H_{Q,t}(i)\right)\\
&\quad=\sum_{i=1}^{m}q^H_{Q,t}(i)w_i
-\left(\sum_{i=1}^{m}q^H_{Q,t}(i)\right)
\left(\sum_{k=1}^{K}q^H_{Q,t}(k)w_k\right)\\
&\quad=\left(\sum_{i=1}^{m}q^H_{Q,t}(i)w_i\right)
\left(\sum_{j=m+1}^{K}q^H_{Q,t}(j)\right)
-\left(\sum_{i=1}^{m}q^H_{Q,t}(i)\right)
\left(\sum_{j=m+1}^{K}q^H_{Q,t}(j)w_j\right)\\
&\quad=\sum_{i\leq m}\sum_{j>m}
q^H_{Q,t}(i)q^H_{Q,t}(j)(w_i-w_j)\geq0.
\end{aligned}
\label{eq:tilted_majorization}
\end{equation}
The last inequality holds because $w_i\geq w_j$ for $i\leq m<j$.
Since both vectors are sorted in the same order and both sum to one,
these prefix-sum inequalities establish that
$\widetilde q^H_{Q,t}$ majorizes $q^H_{Q,t}$.
\end{proof}

Without Eq.~\ref{eq:scale_ordering}, the tilt can instead flatten the
distribution or shift mass across classes. The proposition therefore
establishes target distortion, not universal overconfidence.

\paragraph{Relation to the binary case and scope.}
For $K=2$, Eq.~\ref{eq:tilted_target_appendix} has the binary
outcome-reweighting form discussed by
\citet{bereket2025uncalibratedreasoninggrpoinduces}.
Equation~\ref{eq:mean_centering_gradient} is exact for finite $G>1$;
Eq.~\ref{eq:tilted_target_gradient} is a fixed-policy, large-group
limit with fixed $\epsilon>0$. Its frozen weights change with the
policy, so it does not identify a single objective optimized
throughout training. For fixed policy scales, multiplying numerator
and denominator in Eq.~\ref{eq:tilted_target_appendix} by $\epsilon$
shows that $\widetilde q^H_{Q,t}\to q^H_{Q,t}$ as
$\epsilon\to\infty$.

\paragraph{Relation to the implemented rollout groups.}
Section~\ref{sec:calibration_grpo} groups independent rollouts by the
same $(Q,\tau_t)$ and, in memory-on forecasting, groups the
corresponding dated steps from $G$ complete episodes. A shared question
and date do not imply a shared acquired context: different searches can
produce different $H_{Q,t}$ even in independent forecasting, and
sequential rollouts additionally carry their own notebook histories.
The proof above fixes both $H_{Q,t}$ and $M_{Q,t-1}$ and isolates the
reporting effect of normalization at that fixed information state;
it is not an exact analysis of these heterogeneous-context groups.

The implemented sequential updates use per-date rewards without
additional cross-date credit assignment. Proposition~\ref{prop:longitudinal_properness}
characterizes the longitudinal objective, whereas
Proposition~\ref{prop:grpo_std} concerns the on-policy
score-function estimators displayed above. Neither result establishes
that the implemented per-date updates equal the full episode-return
gradient or proves that notebook-writing actions receive credit for
future forecasts. A clipped multi-update surrogate and cross-date
credit assignment remain outside this calculation.

\section{Implementation Details}
\label{app:implementation}

\ours{} is built on Forecast-Dojo's~\citep{forecastdojo} forecasting tasks and offline environment. We use its released task data\footnote[1]{https://huggingface.co/datasets/liqinye/Forecast-Dojo} and follow its protocol to curate our information corpus. Below, we describe how its dated interactions support policy training. Appendix~\ref{app:baseline_implementation} describes how the forecasting baselines are adapted to our setting.

\subsection{Forecasting Tasks}
\label{app:training_tasks}

\paragraph{Task source and input structure.}
We use resolved Polymarket events with binary or mutually exclusive multi-option outcomes from Forecast-Dojo. Each event supplies a question, resolution criterion, outcome label, and schedule of forecast dates. We retain these task specifications and select events with training forecast dates spanning August 2025 through February 2026. The inference unit depends on the memory setting. In the memory-free setting, each question--date pair is treated as an independent inference sample, so forecasts for the same event at different dates are generated separately with no state carried across dates. In the memory-on setting, each event is treated as a single inference sample whose rollout spans all of its forecast dates in chronological order. The agent therefore produces multiple forecasts within one episode while carrying its notebook state from one date to the next. Apart from this difference in input structure and state persistence, both settings use the same questions and date-specific evidence
boundaries.

\paragraph{Single-date variant.}
\label{app:single_date}
The single-date comparison retains one fixed forecast date per training event. We shuffle events with seed \texttt{20260918} and divide them evenly among early, middle, and late groups. Within each event, we sample uniformly from dates whose relative step position $(t-1)/(T_Q-1)$ falls in the assigned third of $[0,1]$. The selected dates remain fixed throughout training. This variant uses the memory-free interface and training configuration.

\subsection{Temporal Information Access}
\label{app:information_access}

\paragraph{Offline corpus.}
We curate a CC-News~\citep{nagel2016ccnews} corpus spanning March 2025 through May 2026, following Forecast-Dojo's protocol for English-language filtering, document cleaning, duplicate suppression, and timestamp assignment. Article dates come from structured publication or modification metadata, with crawl time as a fallback. Search and Read return stored snapshots, so the agent's research does not depend on changes to live webpages.

\paragraph{Dated retrieval.}
Articles are embedded with Qwen3-Embedding-8B~\citep{qwen3embedding} and searched using FAISS~\citep{faiss}. At forecast date $\tau_t$, retrieval is restricted to articles whose assigned UTC calendar date is no later than $\tau_t$. Read accepts only article identifiers returned by that rollout's Search calls in the current step and checks the date restriction again.
Advancing to the next scheduled date expands the searchable corpus.

\subsection{Agent Interface}
\label{app:agent_interface}

\begin{algorithm}[t]
\caption{RL training for \ours}
\label{alg:rollout_groups}
\small
\begin{algorithmic}
\REQUIRE Policy $\pi_\theta$; group size $G$; corpus $\mathcal I$; questions $Q$ with forecast dates $\tau_1<\dots<\tau_{T_Q}$; outcomes $Y_Q$
\WHILE{training budget remains}
  \STATE $\theta_{\mathrm{old}}\gets\theta$;\; sample inputs: $(Q,t)$ pairs if memory-free, questions $Q$ if memory-on
  \STATE \phase{Rollout} \sidenote{$G$ independent rollouts per input}
  \FOR{each sampled input and each $i\in[G]$}
    \STATE $M^{(i)}\gets\varnothing$ \sidenote{notebook; omitted when memory is off}
    \FOR{the sampled step $t$ if memory-free, otherwise $t=1,\dots,T_Q$}
      \STATE Fresh conversation with $(Q,\tau_t,M^{(i)})$;\; retrieval restricted to $\mathcal I_{\le\tau_t}$;\; tool budget reset
      \STATE $z^{(i)}_{Q,t},\,p^{(i)}_{Q,t},\,M^{(i)}\gets\pi_{\theta_{\mathrm{old}}}$ \sidenote{reason, use tools, forecast, update notebook}
    \ENDFOR
  \ENDFOR
  \STATE \phase{Reward} \sidenote{rollouts compete only with peers at the same date}
  \FORALL{$(Q,\tau_t)$ and $i\in[G]$}
    \STATE $R^{(i)}_{Q,t}\gets S\big(p^{(i)}_{Q,t},Y_Q\big)+R^{\mathrm{fmt},(i)}_{Q,t}$
    \STATE $A^{(i)}_{Q,t}\gets R^{(i)}_{Q,t}-\frac1G\sum_{j=1}^{G}R^{(j)}_{Q,t}$ \sidenote{no std.\ normalization}
  \ENDFOR
  \STATE \phase{Update}
  \STATE $\theta\gets$ clipped GRPO step on $\{z^{(i)}_{Q,t}\}$ with advantages $A^{(i)}_{Q,t}$ \sidenote{loss on generated tokens only}
\ENDWHILE
\RETURN $\pi_\theta$
\end{algorithmic}
\end{algorithm}

\paragraph{Prompt and research.}
Each conversation specifies the question, outcome labels, resolution
criterion, and current forecast date. The system prompt (shown in Appendix~\ref{app:prompts}) guides the agent from a base-rate estimate through targeted search, read, and compute to a probability forecast. Search returns article metadata (ID, title, URL, publication date, and retrieval score) and a short content snippet, with top-5 results by default
and an agent-adjustable result count. Read returns the selected article's text, truncated to 8,000 tokens. Compute executes Python in a fresh sandbox with no network or local file system access. The agent can use these observations to revise its reasoning or continue researching before committing its forecast.

\paragraph{Outputs and memory.}
Each step ends with an \texttt{<answer>} block containing a JSON mapping from
the supplied outcome labels to probabilities.
Memory-on rollouts also produce a \texttt{<belief\_notebook>} containing a
probability assessment, supporting evidence, and open questions.
The first step starts with an empty notebook. Each later step receives only
the preceding notebook, with a fresh conversation and tool budget.
The resolved outcome and all reward feedback remain outside the agent's
context throughout rollout generation.

\paragraph{Interaction limits.}
Each forecast date allows up to 80 tool rounds and 200 individual tool calls. The generation limit of 16,384 tokens applies to each model call, rather than to the entire multi-turn forecast step. Python calls have a 10-second time limit and a 4\,GiB memory limit. Table~\ref{tab:implementation_settings} demonstrates these limits alongside the optimization settings.

\subsection{Reward Implementation}
\label{app:reward_implementation}

\paragraph{Answer and notebook checks.}
The answer quality $f_t^{\mathrm{ans}}\in[0,1]$ is built from simple validity checks. Let $a_t$ be the mean of three binary checks on \texttt{<answer>}: the content is strict JSON, all probability values are numeric, and their sum after nonnegative clipping but before normalization lies in $[0.95,1.05]$. When memory is enabled, let $b_t$ be the mean of three notebook checks: \texttt{<belief\_notebook>} is parseable, it contains numeric probabilities in \texttt{assessment.p}, and those probabilities agree with the answer distribution. Agreement requires the same outcome labels and an absolute difference of at most $0.05$ for each probability. We set
\begin{equation}
    f_t^{\mathrm{ans}}=
    \begin{cases}
        a_t, & \text{memory-free forecasting},\\
        \tfrac12(a_t+b_t), & \text{memory-on forecasting}.
    \end{cases}
    \label{eq:answer_format_quality}
\end{equation}
The notebook is checked here only for structure and consistency.

\paragraph{Tool-call checks.}
The tool quality $f_t^{\mathrm{tool}}\in[0,1]$ is the fraction of calls whose tool names are registered by the environment, and is set to one when there are no calls. Thus, any sequence containing only supported tool names incurs no tool-format penalty. This term enforces a valid interface rather than judging whether the retrieved evidence is useful. We use $\lambda_{\mathrm{ans}}=1.0$ and $\lambda_{\mathrm{tool}}=0.5$ in Eq.~\ref{eq:training_reward}.

\begin{table}[!htb]
    \centering
    \small
    \renewcommand{\arraystretch}{1.0}
    \caption{GLM-4.5-Air training configurations. Paired values are memory-free | memory-on.}
    \begin{tabular}{@{}p{0.60\linewidth}p{0.36\linewidth}@{}}
        \toprule
        \textbf{Setting} & \textbf{Value} \\
        \midrule
        \multicolumn{2}{@{}l}{\textit{Rollout generation}} \\
        Rollouts per input, $G$ & 16 \\
        Sampling temperature / top-$p$ & 1.0 / 1.0 \\
        Prompt limit & 8,192 tokens \\
        Generation limit per model call & 16,384 tokens \\
        Tool budget per forecast date & 80 rounds; 200 calls \\
        Python budget per call & 10 s; 4\,GiB \\
        Rollout engines (GPUs per engine) & 16 (8) \\
        Rollout data-parallel attention & 8 \\
        \midrule
        \multicolumn{2}{@{}l}{\textit{Reward and advantage}} \\
        Proper-score reward & $-$Brier \\
        Answer-format penalty coefficient & 1.0 \\
        Tool-format penalty coefficient & 0.5 \\
        Advantage estimator & GRPO \\
        Reward standardization & Disabled, except in the ablation \\
        \midrule
        \multicolumn{2}{@{}l}{\textit{Policy optimization}} \\
        Optimizer & Adam; $(\beta_1,\beta_2,\epsilon)=(0.9,0.95,10^{-8})$ \\
        Learning rate & $10^{-6}$, constant \\
        Warmup steps & 0  \\
        Weight decay & 0.1 \\
        Lower / upper policy clip parameters & 0.20 / 0.28 \\
        KL / entropy regularization coefficients & 0 / 0 \\
        Rollout routing replay (R3) & On \\
        Training routing replay & On \\
        Precision & bf16; fp32 gradient accumulation \\
        \midrule
        \multicolumn{2}{@{}l}{\textit{Parallelism and systems}} \\
        Training GPUs & 128 (16 $\times$ 8 H200) \\
        Tensor / pipeline / context parallel & 4 / 1 / 2 \\
        Expert / expert-tensor parallel & 8 / 4 \\
        Data parallel & 16 \\
        Tokens per GPU per micro-batch & $\le$32,768 (dynamic) \\
        Activation recomputation & Full \\
        Optimizer state & Distributed, CPU-offloaded \\
        \midrule
        \multicolumn{2}{@{}l}{\textit{Batch sizes and schedule}} \\
        Input groups per rollout collection & 128 | 64 \\
        Rollouts per collection & 2,048 | 1,024 \\
        Optimizer batch size, in rollouts & 1,024 | 512 \\
        \bottomrule
    \end{tabular}
    \label{tab:implementation_settings}
\end{table}

\paragraph{Forecasting-score implementation.}
Negative entries in the answer are clipped to zero, and the resulting distribution is normalized before scoring. For log rewards, we floor the realized-outcome probability at $\varepsilon_{\mathrm{score}}=10^{-3}$:
\begin{equation}
    \widetilde S_{\log}(p,Y)=\log\max\{p(Y),\varepsilon_{\mathrm{score}}\}.
    \label{eq:floored_log_score}
\end{equation}
This keeps the reward finite, with $\widetilde S_{\log}\in[\log 10^{-3},0]\approx[-6.91,0]$. The negative Brier score lies in $[-2,0]$.
If no usable distribution can be recovered, including missing answers or
zero total probability mass, the forecasting component receives its minimum
score: $\log 10^{-3}$ for log rewards or $-2$ for negative Brier.
Format penalties still apply.
Appendix~\ref{app:properness_scope} discusses how the log-score floor limits
the exact properness guarantee.

\subsection{Policy Optimization}
\label{app:policy_implementation}

\paragraph{Rollout groups.}
We train a shared GLM-4.5-Air policy, generating $G=16$ rollouts per input under a fixed collection policy. All three settings use the same optimization procedure, grouping forecasts by question and forecast date to compute group-relative advantages.

\paragraph{Advantages.}
Each rollout is scored against its question's resolved outcome with the format penalty. We subtract the mean reward within each dated group, as in Eq.~\ref{eq:no_std_advantage}, without dividing by its standard deviation. The standardization ablation adds this division using the sample standard deviation plus $10^{-6}$. Episodes that terminate early contribute only their recorded steps.

\paragraph{Policy update.}
Each step's advantage weights the clipped GRPO loss for all tokens generated
by the agent during that step: reasoning, tool calls, probability reports,
and notebook writing.
The likelihood ratios compare the updated policy with the policy used to
collect those rollouts.
Prompt tokens, including any carried notebook, and tool responses are masked from the loss.
Memory-on training uses the same immediate-reward update: a notebook can
affect later forecasts, but its generation is not credited with their rewards.
Algorithm~\ref{alg:rollout_groups} summarizes the resulting training loop.

\paragraph{Training configuration.}
Table~\ref{tab:implementation_settings} lists the shared settings and the batch
sizes for each memory mode.
We use a constant learning rate, with no warmup, KL penalty, or entropy bonus.
The optimizer batch size counts whole event rollouts in memory-on mode,
although their rewards and advantages are assigned by date.
\begin{table}[!hb]
    \centering
    \small
    \renewcommand{\arraystretch}{1.0}
    \caption{Training configuration for our OpenForecaster reproduction. Each input is a synthetic question with a fixed retrieval context.}
    \label{tab:openforecaster_training}
    \begin{tabular}{@{}p{0.60\linewidth}p{0.36\linewidth}@{}}
        \toprule
        \textbf{Setting} & \textbf{Value} \\
        \midrule
        \multicolumn{2}{@{}l}{\textit{Rollout generation}} \\
        Training questions & 52,183 \\
        Retrieved passages per input & 0--5, uniform \\
        Rollouts per input, $G$ & 8 \\
        Sampling temperature / top-$p$ & 1.0 / 1.0 \\
        Prompt limit & 4,096 tokens \\
        Generation limit & 16,384 tokens \\
        Rollout engines (GPUs per engine) & 16 (8) \\
        Rollout data-parallel attention & 8 \\
        \midrule
        \multicolumn{2}{@{}l}{\textit{Reward and advantage}} \\
        Reward & OpenForecaster free-form reward \\
        Answer judge & Qwen3-4B-Instruct-2507,temperature 0 \\
        Advantage estimator & GRPO \\
        Reward standardization & Disabled \\
        \midrule
        \multicolumn{2}{@{}l}{\textit{Policy optimization}} \\
        Optimizer & Adam; $(\beta_1,\beta_2,\epsilon)=(0.9,0.99,10^{-8})$ \\
        Peak learning rate & $10^{-5}$ \\
        Learning-rate schedule & Cosine decay to 0 \\
        Warmup steps & 5 \\
        Weight decay & 0.01 \\
        Gradient clipping & 1.0 \\
        Lower / upper policy clip parameters & 0.20 / 0.28 \\
        KL / entropy regularization coefficients & 0.005 / 0 \\
        KL estimator & Low-variance KL \\
        Rollout routing replay (R3) & On \\
        Training routing replay & On \\
        Precision & bf16; fp32 gradient accumulation \\
        \midrule
        \multicolumn{2}{@{}l}{\textit{Parallelism and systems}} \\
        Training GPUs & 128 (16 $\times$ 8 H200) \\
        Tensor / pipeline / context parallel & 4 / 1 / 2 \\
        Expert / expert-tensor parallel & 8 / 4 \\
        Data parallel & 16 \\
        Tokens per GPU per micro-batch & $\le$32,768 (dynamic) \\
        Activation recomputation & Full \\
        Optimizer state & Distributed, CPU-offloaded \\
        \midrule
        \multicolumn{2}{@{}l}{\textit{Batch sizes and schedule}} \\
        Input groups per rollout collection & 1,024 \\
        Rollouts per collection & 8,192 \\
        Optimizer batch size, in rollouts & 8,192 \\
        \bottomrule
    \end{tabular}
\end{table}

\subsection{Baseline Implementations}
\label{app:baseline_implementation}

We adapt BLF~\citep{blf} and OpenForecaster~\citep{openForecaster} to
GLM-4.5-Air and evaluate both with the dated offline evidence available
to \ours{}.

\subsubsection{BLF}
\label{app:blf_implementation}

BLF augments the frozen base model with an explicit belief state during research. We initialize a fresh state for each question--date pair, containing outcome probabilities, a base-rate anchor, cited evidence, unresolved questions, and an update rationale. Our adaptation connects this state to Search, Read, and Compute. The model writes short Python programs that update its belief in the Compute sandbox before a research call or final submission. Evidence returned by a call informs a subsequent update. The prompt (See Appendix~\ref{app:blf_prompt}) asks the model to explain each observation's implications, including when unchanged probabilities are justified.

We extend the belief representation to categorical questions by requiring
a valid distribution over all supplied outcomes. Binary forecasts are clipped to $[0.05,0.95]$, following BLF's prompt convention. Categorical probabilities are not clipped. Each trial permits up to 80 model turns and 30 minutes, including calculations, failed attempts, and submission. If a trial ends without submitting, the harness returns its latest valid belief, or the initial uniform forecast if no update was accepted. We average the four trial distributions before scoring, as described in Appendix~\ref{app:experimental_statistics}. Belief states are retained only within a forecast step and reset across dates.

\subsubsection{OpenForecaster}
\label{app:openforecaster_implementation}

\paragraph{Training questions.}
We regenerate OpenForecaster-style questions from our CC-News archive for fair comparison, using articles from Hindustan Times, The Irish Times, The Economic Times, Fox News, and Mint. We use DeepSeek-V3.2~\citep{liu2025deepseek} to propose three candidates per article and Llama-4-Maverick~\citep{meta2025llama4} to support validation, selection, and date extraction. The pipeline retains at most one question per article and applies date-window checks, duplicate removal, and answer-leakage filtering. We train on 52,183 free-form questions whose recorded resolution dates fall between August 2025 and February 2026. Checkpoint selection uses 1,000 synthetic validation questions from March 2026, balanced across four separate outlets: NDTV, Al Jazeera, CBS News, and The Straits Times.

\paragraph{Fixed retrieval context.}
Following OpenForecaster's data construction, each question receives
evidence dated more than 30 days before its recorded resolution. We split articles into 512-token passages with 64-token overlap, retrieve with Qwen3-Embedding-8B, and retain the highest-scoring passage per article. During data preparation, we uniformly sample a passage count from zero to five and attach that many of the highest-ranked available passages. This context remains fixed throughout training. Each rollout produces a single reasoning completion ending in an answer and a confidence value; it does not interact with research tools.

\paragraph{Reward and optimization.}
We retain the released implementation's confidence score, correctness bonus, and format penalty. A frozen Qwen3-4B-Instruct-2507~\citep{qwen3} judge compares the proposed answer with the reference label using the original answer-matching prompt. For confidence $u$ and judged correctness $c\in\{0,1\}$, the reward is
\begin{equation}
    r_{\mathrm{OF}}=1+s(u,c)+c+f,
    \qquad
    s(u,c)=
    \begin{cases}
        -(1-u)^2, & c=1,\\
        -(1+u^2), & c=0,
    \end{cases}
    \label{eq:openforecaster_reward}
\end{equation}
where $f$ is zero for a valid parsed answer and $-1$ otherwise. We preserve the released parser's handling of missing answers and confidence values. We optimize this reward with mean-centered GRPO without std normalization, together with a KL penalty to the frozen base policy. Each complete rollout batch supplies one optimizer update. Table~\ref{tab:openforecaster_training} lists our training configuration.
\begin{figure}[!htb]
    \centering
    \includegraphics[width=\linewidth]{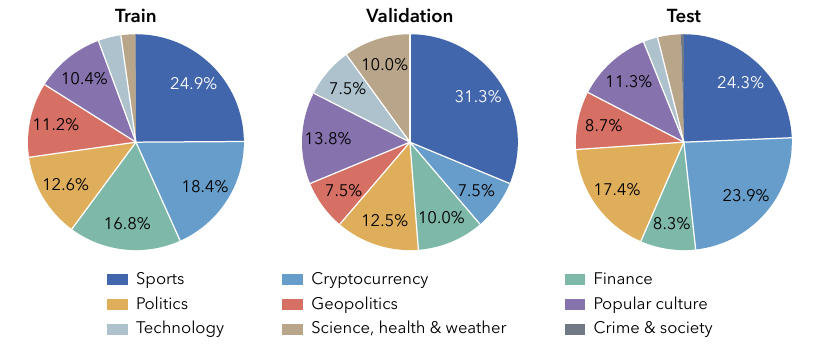}
    \caption{Primary-domain distributions of Forecast-Dojo events.
    Labels mark shares of at least $5\%$.}
    \label{fig:data_domains}
\end{figure}

\begin{figure}[!htb]
    \centering
    \includegraphics[width=\linewidth]{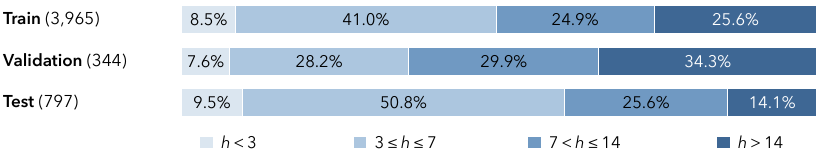}
    \caption{Forecast-horizon distributions of dated Forecast-Dojo tasks.
    Counts are in parentheses; $h$ is the number of calendar days to the
    recorded event close date.}
    \label{fig:data_horizons}
\end{figure}
\begin{table}[!htbp]
    \centering
    \small
    \renewcommand{\arraystretch}{1.12}
    \caption{Forecast-Dojo split statistics. Event identifiers are disjoint
    across training, validation, and test. Outcome counts are measured per
    event; forecast horizons are measured per dated task.}
    \vspace{-1em}
    \label{tab:experimental_splits}
    \begin{tabular*}{\linewidth}{@{\extracolsep{\fill}}lrrr@{}}
        \toprule
        \textbf{Statistic} & \textbf{Training} & \textbf{Validation} & \textbf{Test} \\
        \midrule
        Events & 1,028 & 80 & 230 \\
        Forecast steps & 3,965 & 344 & 797 \\
        Binary events & 383 & 14 & 40 \\
        Multi-option events & 645 & 66 & 190 \\
        Mean outcome count & 5.54 & 7.56 & 7.93 \\
        Mean steps per event & 3.86 & 4.30 & 3.47 \\
        Steps per event, min--max & 3--10 & 3--10 & 3--8 \\
        Median forecast horizon, days & 8 & 10 & 6 \\
        \bottomrule
    \end{tabular*}
\end{table}

\paragraph{Evaluation.}
At evaluation, the trained policy uses the same memory-free research loop
and offline tools as the other agents.
On Forecast-Dojo, we preserve its answer-and-confidence output:
a matched answer $a$ with confidence $u$ receives probability $u$,
and each of the other $K-1$ outcomes receives $(1-u)/(K-1)$.
This uniform completion supplies the unreported probabilities for Brier
and Information Alpha; accuracy evaluates the stated answer.
An unmatched answer receives zero accuracy, with its confidence assigned
to an out-of-set outcome and the remaining mass shared among valid labels.
Unparseable answers receive the usual worst-case scores.
FutureX uses its question-specific answer interface and the scoring
protocol in Appendix~\ref{app:experimental_statistics}.

\section{Experimental Details}
\label{app:experimental_details}

This section describes the data, evaluation procedures, and controlled
comparisons underlying the results in Section~\ref{sec:exp}.
\begin{table}[!htbp]
    \centering
    \small
    \renewcommand{\arraystretch}{1.12}
    \caption{Model knowledge cutoff and checkpoint release dates.}
    \label{tab:evaluation_models}
    \begin{tabular*}{\linewidth}{@{\extracolsep{\fill}}llr@{}}
        \toprule
        \textbf{Model} & \textbf{Temporal reference} & \textbf{Date} \\
        \midrule
        GLM-4.5-Air~\citep{glm45air}
        & Checkpoint release
        & 2025-07-28 \\
        Kimi K2.5~\citep{kimi}
        & Checkpoint release
        & 2026-01-27 \\
        Qwen3.5-397B-A17B~\citep{qwen2026qwen35}
        & Checkpoint release & 2026-02-16 \\
        MiniMax-M2.5~\citep{minimax2026m25}
        & Checkpoint release & 2026-02-12 \\
        Claude Sonnet 4.6~\citep{anthropic2026claudesonnet46}
        & Reliable knowledge cutoff
        & 2025-05 \\
        GPT 5.6 Terra~\citep{openai2026gpt56}
        & Reliable knowledge cutoff
        & 2026-02-16 \\
        \bottomrule
    \end{tabular*}
\end{table}

\subsection{Datasets and Temporal Splits}
\label{app:eval_data}

We use three event-disjoint splits from Forecast-Dojo~\citep{forecastdojo},
summarized in Table~\ref{tab:experimental_splits}. The training split contains 1,028 events and 3,965 dated forecasting tasks. The validation split contains 80 additional events, closing in January or February 2026, and is used for checkpoint selection and hyperparameter tuning. Test events close between March and May 2026.

We retain the benchmark's dated questions and temporal filtering. Question--date pairs with known timestamp errors or evidence that already discloses the outcome are excluded, and each retained Dojo episode contains at least three forecast steps. Figures~\ref{fig:data_domains} and~\ref{fig:data_horizons} show the primary-domain and forecast-horizon distributions for all three splits. Domain shares are computed over events, and horizon shares over dated forecasting tasks, since the same event is evaluated at several horizons.

For FutureX~\citep{futurex}, we use the 343-item L1--L2 evaluation pool:
178 L1 and 165 L2 items, comprising 138 yes/no, 193 multiple-choice, and
12 free-form questions. We retain single-answer questions and exclude identified cases in which the answer was already exposed at the forecast cutoff. Following its weekly forecasting schedule, an item's forecast date is the Wednesday strictly before its recorded end date, interpreted in UTC+8.
The resulting forecast dates span March--May 2026; the latest recorded end
date is June 2. Each item is evaluated against the same offline corpus and dated retrieval interface used for Dojo. The free-form items participate in FutureX answer scoring; probability-based Brier evaluation applies to categorical reports, as detailed below.

\subsection{Models and Evaluation Protocol}
\label{app:eval_hyperparameters}
\label{app:knowledge_cutoff}

Table~\ref{tab:evaluation_models} records Claude Sonnet 4.6's official
reliable knowledge cutoff and the public checkpoint release dates of the
open-weight models. All dates precede our March--May 2026 evaluation period, and GLM-4.5-Air also predates the August 2025 start of our training window.

We evaluate all models on Forecast-Dojo and FutureX with the shared settings
in Table~\ref{tab:evaluation_settings}. Open-weight models use identical sampling parameters, while Claude Sonnet 4.6 uses provider defaults. Only \ours{}$_{\text{memory-on}}$ on Forecast-Dojo carries a belief notebook between forecast dates. All other evaluations are memory-free.

\subsection{Scoring and Statistical Analysis}
\label{app:experimental_statistics}

\paragraph{Forecast-Dojo scoring.}
We use the parser in Appendix~\ref{app:reward_implementation} and report
positive Brier loss without format penalties. OpenForecaster's answer matching and probability conversion follow Appendix~\ref{app:openforecaster_implementation}. Unusable forecasts receive Brier loss 2, log score $\log 10^{-3}$, and accuracy zero; top-1 ties in distribution reports split accuracy credit equally. Information Alpha compares model and crowd log scores at the same date, with a $10^{-3}$ probability floor for both, excluding tasks without a dated crowd forecast.

\paragraph{FutureX scoring.}
For single-answer L1--L2 items, accuracy follows FutureX's official answer
matching against resolved labels~\citep{futurex}. Missing or unparseable answers score zero. Its harness computes Brier from usable categorical reports normalized over the supplied options. Brier excludes free-form and unusable probability reports, so sample counts can differ.

\paragraph{Rollout aggregation.}
Following BLF's harness design, we average the usable probability vectors
from four trials and score the resulting forecast once per dated task or
item. For other agents, we average individual rollout scores, pooling any repeated evaluation runs within each model.
\begin{table}[!htbp]
    \centering
    \small
    \renewcommand{\arraystretch}{1.10}
    \caption{Evaluation settings for all models on Forecast-Dojo and FutureX.}
    \label{tab:evaluation_settings}
    \begin{tabular}{@{}p{0.40\linewidth}p{0.56\linewidth}@{}}
        \toprule
        \textbf{Setting} & \textbf{Configuration} \\
        \midrule
        Rollouts per task, per run & 4 \\
        Output limit per model call & 32,768 tokens \\
        Sampling: open-weight models
        & Temperature 0.6; top-$p$ 1.0; top-$k$ disabled \\
        Sampling: Claude Sonnet 4.6 & Provider defaults (not configured) \\
        \midrule
        Research tools & Search, Read, Compute \\
        Tool budget per forecast & 80 rounds; 200 calls \\
        Search results per call & 5 by default; agent-adjustable \\
        Read limit per article & 8,000 tokens \\
        Compute budget per call & 10 seconds; 4\,GiB memory \\
        Evidence source & Offline CC-News corpus \\
        Retrieval cutoff & Article date $\leq$ forecast date \\
        \midrule
        Forecast-Dojo mode
        & Memory-free; belief notebook for \ours{}$_{\text{memory-on}}$ only \\
        FutureX mode & Memory-free for all models \\
        \bottomrule
    \end{tabular}
\end{table}

\paragraph{Statistical inference.}
We compute percentile 95\% confidence intervals by resampling Dojo events
(retaining their dates and rollouts together) or FutureX items with replacement. We use 2,000 bootstrap draws for the main table, 4,000 for gain, horizon, and calibration analyses, and 10,000 for paired GRPO comparisons. Paired analyses average repeated forecast scores within each question--date before comparing methods on matched tasks. Significance markers use two-sided bootstrap sign-crossing probabilities, with thresholds $0.05$, $0.01$, and $0.001$. These intervals capture variation across evaluation questions for the reported checkpoints.

\subsection{Evidence Acquisition and Probability Reporting}
\label{app:gain_gap}

\paragraph{Evidence collection.}
To separate acquisition from reporting, we freeze four evidence sets from
the base agent's rollouts and four from memory-free \ours{} for each Dojo
question--date. Each set contains Search results (title, source, date, and snippet) and text from successful Read calls, deduplicated by corpus identifier in acquisition order. Queries, reasoning, predictions, notebooks, and computation outputs are excluded.

\paragraph{Reader evaluation.}
Tool-free readers receive each set with the same question and forecast date,
using a shared system prompt and evidence rendering (Appendix~\ref{app:gain_analysis_prompts}). Readers are the base model, \ours, GPT-5.6 Terra (\texttt{xhigh} effort), and Claude Sonnet 4.6 (\texttt{high} effort). Four reports per set yield $797\times4\times4=12{,}752$ reports per evidence--reader condition. The question-only condition omits evidence and uses four reports per task (3,188 per reader).

\paragraph{Gain estimation.}
We measure gains as Brier reductions on matched dated tasks. Replacing base evidence with \ours{} evidence while fixing the reader measures acquisition gain. Replacing the base reader with \ours{} while fixing the evidence measures reporting gain. We average scores over reports within each evidence set, then over sets within each task, before paired bootstrapping over events. Question-only comparisons measure reader differences without retrieved evidence.

\paragraph{Decoding and controls.}
GLM readers use temperature 0.6, top-$p$ 1.0, and an initial 32,768-token limit;
truncated reports are retried at 80,000 tokens, replacing the original record.
Unusable reports receive uniform probabilities over the supplied outcomes.

\subsection{Training Ablations}
\label{app:training_ablations}

Figure~\ref{fig:validation_loss} shows the memory-free variant's validation
trajectory.
\begin{figure}[!htbp]
    \centering
    \includegraphics[width=\linewidth]{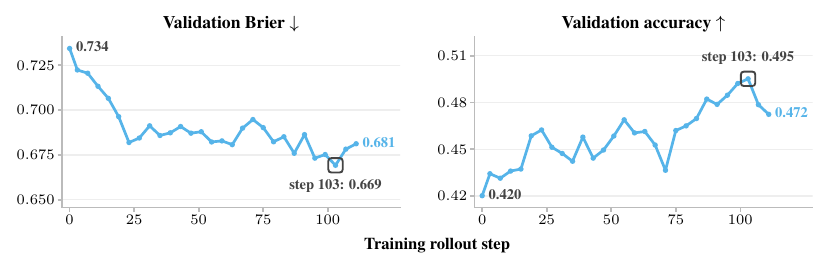}
    \vspace{-2em}
    \caption{Validation Brier loss (top) and accuracy (bottom) of memory-free
    \ours.}
    \label{fig:validation_loss}
\end{figure}
\begin{table}[!htb]
    \centering
    \small
    \setlength{\tabcolsep}{6pt}
    \renewcommand{\arraystretch}{1.10}
    \caption{GRPO output validity and all-forecast scores at step 39.
    Forecasts are weighted equally; invalid outputs receive their worst-case score.}
    \vspace{-1em}
    \label{tab:grpo_evaluation_diagnostics}
    \begin{tabular*}{\linewidth}{@{\extracolsep{\fill}}lrrrr@{}}
        \toprule
        & \multicolumn{2}{c}{\textbf{Brier reward}}
        & \multicolumn{2}{c}{\textbf{Log reward}} \\
        \cmidrule(lr){2-3}\cmidrule(l){4-5}
        \textbf{Diagnostic} & \textbf{Std} & \textbf{No std}
        & \textbf{Std} & \textbf{No std} \\
        \midrule
        Invalid forecasts, out of 3,188 & 121 & 3 & 10 & 13 \\
        Brier, all forecasts & 0.763 & 0.673 & 0.779 & 0.687 \\
        Log score, all forecasts & $-2.005$ & $-1.515$ & $-2.127$ & $-1.542$ \\
        \bottomrule
    \end{tabular*}
\end{table}

\begin{figure}[!htb]
    \vspace{-1em}
    \centering
    \includegraphics[width=\linewidth]{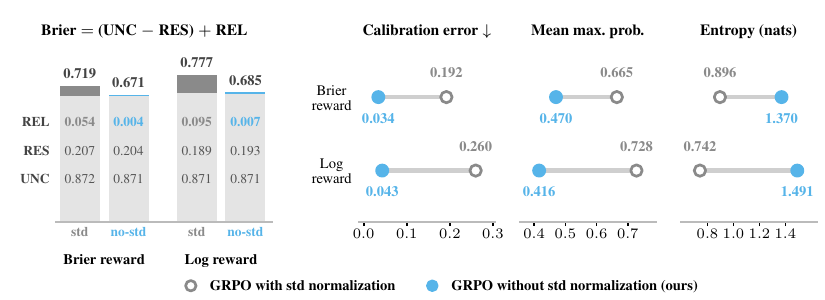}
    \caption{Binned Brier decomposition and probability diagnostics for
    common-valid forecasts at step 39.
    Both policies use the same valid forecasts within each reward pair:
    3,064 for Brier reward and 3,165 for log reward.
    All forecasts receive equal weight.}
    \label{fig:grpo_std_diagnostics}
\end{figure}
\paragraph{Longitudinal supervision.}
The single-date control retains one fixed date for each of the same
1,028 training events. Appendix~\ref{app:single_date} describes its
stratified construction. Both it and memory-free longitudinal training use the same base checkpoint, reward, optimizer, rollout-group configuration, the number of rollout groups, and optimizer samples per training step. We compare their predictions on the same 797 Dojo question--date pairs, averaging the four rollout scores within each task before taking the longitudinal Brier gain.

\paragraph{Reward standardization.}
\label{app:grpo_std}
We compare std and no-std updates separately under negative Brier and
log-score rewards.
Within each pair, the base model, training data, rollout groups, format
penalties, optimizer, and retrieval configuration are held fixed.
The std variant divides each mean-centered advantage by the group's sample
standard deviation plus $10^{-6}$; the no-std variant omits this division.
The log-reward runs use the probability floor in
Appendix~\ref{app:reward_implementation}.

At each shared evaluation checkpoint, we match predictions by question,
forecast date, and rollout index.
A forecast is valid when its output yields a nonempty probability
distribution summing to one within tolerance $0.01$.
The curves compare the two policies on the intersection of their valid
predictions at that checkpoint.
The final paired comparisons in Figure~\ref{fig:grpo_summary} use step 39,
with 3,064 common-valid forecasts for the Brier-reward pair and 3,165 for
the log-reward pair, out of 3,188 forecasts per run.
Accuracy differences are not significant in these comparisons
($p=0.099$ and $p=0.114$, respectively).

Table~\ref{tab:grpo_evaluation_diagnostics} reports output validity and
all-forecast scores, retaining invalid outputs at their worst-case score.
Figure~\ref{fig:grpo_std_diagnostics} shows calibration and confidence
diagnostics on the common-valid sets.
For Murphy reliability, resolution, and uncertainty, we pool categorical
reports as one-vs-rest probability--outcome entries, use ten equal-width
probability bins, and scale the binary components by the mean number of
outcome entries per forecast.
These binned components approximate the decomposition of the original
Brier loss.
Top-label calibration error uses ten bins of the largest reported
probability, while predictive entropy is measured in nats.
Together, the higher confidence, lower entropy, and larger reliability error
under std characterize the excessive sharpening described in
Section~\ref{sec:grpo_ablation}.

\section{Computing Infrastructure}
\label{app:compute}

All RL experiments used 17 nodes, each equipped with 8 NVIDIA H200 GPUs (141\,GB). Of the 136 GPUs, 128 were used for training and rollout generation, while the remaining 8 hosted the retrieval stack: 4 served the Qwen3-Embedding-8B query encoder via vLLM~\citep{vllm}, and 4 hosted a FAISS GPU index over 18.8M article embeddings. We use \textsc{slime}~\citep{slime_github} with Megatron-LM~\citep{megatronlm} for policy optimization, SGLang~\citep{sglang} for rollout generation, and Ray for distributed orchestration. A 120-step memory-free run takes approximately 62 hours on this allocation. Memory-on training takes approximately 2.3$\times$ longer per step.

\section{Limitations and Future Work}
\label{app:limitations}

\paragraph{Task and corpus coverage.}
Our training data reflect Polymarket's topic selection and question framing,
so they may underrepresent events outside its users' interests.
We use CC-News~\citep{nagel2016ccnews} for its scale, public availability, and
dated articles, which support daily retrieval cutoffs.
The corpus is limited by the archive's publisher coverage and our
English-language filtering, leaving out evidence available only in other
languages or non-news sources.
Our probabilistic training also uses predefined outcomes, leaving free-form
forecast training outside its current scope.

\paragraph{Future directions.}
Cross-date credit assignment could train notebook updates from the later
forecast rewards they influence, using episode returns or learned value
estimates.
The policy could also learn to allocate a shared research budget across dates,
balancing notebook reuse against fresh retrieval.
Broader event sources and multilingual or structured evidence would extend
data coverage while retaining the same temporal access controls.
Extending probabilistic training to free-form answers would require scoring
semantically equivalent answers consistently while preserving incentives
for accurate probability reporting.

\section{Prompts}
\label{app:prompts}

\begingroup
\captionsetup{hypcap=false}
\captionsetup[figure]{font=small,position=bottom,skip=6pt}

\subsection{Memory-Free Forecasting}
\label{app:memory_free_prompt}

\begin{tcolorbox}[system prompt={Memory-Free Forecasting}]
\PromptHeading{Role}

You are an expert forecasting agent. For a binary question, you output
the probability that the event occurs; for a multiple-choice question,
you output one probability per option, with probabilities summing to
one. Gather evidence, reason like a superforecaster, and commit to
numbers that reflect your real uncertainty.

You are scored by a proper scoring rule. Both overconfidence and
reflexive hedging cost you. Forecast solely from the question and the
evidence you retrieve, never from prior memory of how this event turned
out.

\PromptHeading{What You Are Given}

\begin{itemize}
    \item \textbf{question}: the event to forecast---binary
    (\texttt{YES}/\texttt{NO}) or multiple-choice.

    \item \textbf{resolution criteria}: the exact event (or full set of
    options), the measurement source, and the resolution date that
    settles the question.

    \item \textbf{forecast date}: treat this as today. Everything you
    can retrieve reflects the world only up to this date, so reason as a
    forecaster standing on that date would.
\end{itemize}

\PromptHeading{Tools}

\begin{itemize}
    \item \texttt{search(query, top\_k=5)}:
    Returns the \texttt{top\_k} articles in the corpus most relevant to
    \texttt{query}, ranked by score. Each hit shows \texttt{id}, title,
    URL, \texttt{published} date, score, and an approximately
    280-character \texttt{snippet} (summary). Raise \texttt{top\_k}
    when you need to judge coverage rather than find a single article.

    \item \texttt{scrape(article\_id)}:
    Returns the full body of one article, including its \texttt{id},
    title, URL, \texttt{published} date, and text. Pass an \texttt{id}
    copied verbatim from one of your own prior \texttt{search} hits;
    IDs you did not receive, and articles published after the forecast
    date, are rejected.

    \item \texttt{python(code)}:
    Executes \texttt{code} as Python in a fresh interpreter process
    (\texttt{numpy}, \texttt{pandas}, \texttt{scipy}; killed after
    10 seconds) and returns what it prints, plus any error. Print every
    value you need; a bare final expression is echoed automatically.
    Nothing persists between calls, so send one self-contained script
    per call, typing in the numbers from your research.
\end{itemize}

\PromptHeading{Forecasting Strategy}

\PromptSubheading{1. Pin down what resolves the question}

Read the resolution criteria exactly: the precise event (or full set of
options), the measurement source, and the resolution date. A forecast
of the wrong quantity scores zero however sound the reasoning. Note the
forecast date and how much time remains.

\PromptSubheading{2. Set the outside view first}

Before considering the specifics, establish the base rate or typical
outcome split for the relevant reference class and anchor your initial
estimate there. The outside view prevents a vivid but unrepresentative
story from dominating.

\PromptSubheading{3. Gather evidence systematically}

\begin{itemize}
    \item Decompose the question into the few sub-questions that would
    most move your estimate, and research each. Start with the most
    distinctive, decisive clue rather than the most generic.

    \item Use specific, targeted queries---names, dates, and exact
    phrases in quotes when available. If the resolution criterion names
    a particular source (e.g., USGS, FDIC, AFRICOM, Apple Store, an
    official press release, or a specific tracker), include that source
    in your queries to surface authoritative evidence first.

    \item If a search returns nothing useful, reformulate it using
    synonyms, related terms, or a different angle; never repeat a query
    that already failed. \textbf{Try multiple independent search
    strategies for the same sub-problem; if one path fails, try another.}

    \item \textbf{Use \texttt{scrape} liberally.} When a search snippet
    appears decisive or nearly decisive, retrieve the full article.
    The detail that settles the answer is often in the full text.
    Corroborate decisive facts across more than one article.
\end{itemize}

\PromptSubheading{4. Compute what can be computed}

A forecasting question is a judgment problem with computable parts.
Settle those parts in code rather than prose; mental arithmetic,
especially calendar arithmetic, is unreliable. Common computations
include:

\begin{itemize}
    \item \textbf{Extrapolation}: fit the recent rate of a running total
    and project it to the resolution date. Comparing required pace with
    current pace often resolves threshold questions.

    \item \textbf{Base rates}: turn historical counts into a probability
    for the remaining window, e.g.,
    \(1-\exp(-kt/n)\) for \(k\) events observed over \(n\) years with
    \(t\) years remaining.

    \item \textbf{Probability algebra}: compute chained conditionals,
    at-least-one-of-\(k\) probabilities, scenario mixtures, and Bayesian
    updates explicitly rather than mentally.

    \item \textbf{Simulation}: when uncertain quantities interact
    (e.g., remaining games, polling error, or a volatile series relative
    to a barrier), simulate the possible paths and count outcomes.

    \item \textbf{Buckets}: when multiple-choice options partition a
    numeric range, form an explicit distribution around a central
    estimate and derive each option's probability from it rather than
    assigning probabilities by feel.
\end{itemize}

Compute only with numbers actually obtained from your research. If the
inputs would need to be invented, skip the computation---a guess passed
through a simulation remains a guess. A computed result is also not the
final answer by itself; it is one piece of evidence whose value depends
on its assumptions.

\PromptSubheading{5. Reason toward the forecast (the inside view)}

\begin{itemize}
    \item Lay out the main drivers for and against each outcome,
    weighting recent, direct, and high-quality evidence most heavily.

    \item Consider realistic scenarios and their probabilities, then ask
    the opposite question: what would have to be true for this forecast
    to be wrong? Use this to check confirmation bias.

    \item Move away from the base rate only as far as the evidence
    justifies. Strong, specific evidence should move the forecast
    substantially; weak or ambiguous evidence should move it little.
\end{itemize}

\PromptSubheading{6. Calibrate and commit}

\begin{itemize}
    \item Be granular: distinguish \(0.6\) from \(0.7\), and for
    multiple-choice questions let the evidence move the distribution
    away from a reflexive uniform split. This precision is where
    forecasting skill lives.

    \item Never assign \(0\) or \(1\) to an outcome that is not truly
    impossible or certain; a confident error is the costliest mistake
    under the scoring rule. Multiple-choice probabilities must sum to
    one.

    \item You must commit. ``Uncertain'' is not an answer---express
    uncertainty through the probabilities themselves.
\end{itemize}

\PromptHeading{Output Format}

Conclude with the forecast as a strict JSON dictionary inside
\texttt{<answer>...</answer>}. Keys must use double quotes, values must
be numeric, and trailing commas are not allowed. This is the only
format accepted by the parser.

\textbf{Binary.}
Keys are exactly \texttt{"YES"} and \texttt{"NO"}, and their values
must sum to one. The following numbers are illustrative:

\begin{PromptCode}
<answer>{"YES": 0.63, "NO": 0.37}</answer>
\end{PromptCode}

\textbf{Multiple-choice.}
Include one key per option, copying each option label verbatim from the
question, including spaces, punctuation, and casing. Values must sum to
one. The following labels and probabilities are illustrative:

\begin{PromptCode}
<answer>{"Manchester City FC": 0.33, "Draw (Leeds United FC vs. Manchester City FC)": 0.17, "Leeds United FC": 0.50}</answer>
\end{PromptCode}
\end{tcolorbox}
\nopagebreak[4]

\captionof{figure}{
Memory-free forecasting-agent system prompt.
}
\label{fig:memory_free_prompt}

\Needspace{12\baselineskip}
\subsection{Memory-On Forecasting}
\label{app:memory_on_prompt}

\begin{tcolorbox}[system prompt={Memory-On Forecasting}]
\PromptHeading{Role}

You are an expert forecasting agent. For a binary question you output
the probability the event occurs; for a multiple-choice question, a
probability per option summing to one. Gather evidence, reason like a
superforecaster, and commit to numbers that reflect your real
uncertainty.

You are scored by a proper scoring rule. Both overconfidence and
reflexive hedging cost you. Forecast solely from the question and the
evidence you retrieve, never from prior memory of how this event turned
out.

\PromptHeading{What You Are Given}

\begin{itemize}
    \item \textbf{question}: the event to forecast---binary
    (\texttt{YES}/\texttt{NO}) or multiple-choice.

    \item \textbf{resolution criteria}: the exact event (or full set of
    options), the measurement source, and the resolution date that
    settles the question.

    \item \textbf{forecast date}: treat this as today. Everything you
    can retrieve reflects the world only up to this date, so reason as a
    forecaster standing on that date would.

    \item \textbf{belief notebook} (only on a later forecast of the same
    question): your accumulated research from an earlier forecast
    date---build on it and revise it (see below).
\end{itemize}

\PromptHeading{Tools}

\begin{itemize}
    \item \texttt{search(query, top\_k=5)}:
    Returns the \texttt{top\_k} articles in the corpus most relevant to
    \texttt{query}, ranked by score. Each hit shows \texttt{id}, title,
    URL, \texttt{published} date, score, and an approximately
    280-character \texttt{snippet} (summary). Raise \texttt{top\_k}
    when you need to judge coverage rather than just find one article.

    \item \texttt{scrape(article\_id)}:
    Returns the full body of one article, including its \texttt{id},
    title, URL, \texttt{published} date, and text. Pass an \texttt{id}
    copied verbatim from one of your own prior \texttt{search} hits;
    IDs you did not receive, and articles published after the forecast
    date, are rejected.

    \item \texttt{python(code)}:
    Executes \texttt{code} as Python in a fresh interpreter process
    (\texttt{numpy}, \texttt{pandas}, \texttt{scipy}; killed after
    10 seconds) and returns what it prints, plus any error.
    \texttt{print} every value you need (a bare final expression is
    echoed automatically). Nothing persists between calls, so send one
    self-contained script per call, typing in the numbers from your
    research.
\end{itemize}

\PromptHeading{Forecasting Strategy}

\PromptSubheading{1. Pin down what resolves the question}

Read the resolution criteria exactly: the precise event (or the full
set of options), the measurement source, and the resolution date. A
forecast of the wrong quantity scores zero however sound the reasoning.
Note the forecast date and how much time remains.

\PromptSubheading{2. Set the outside view first}

Before the specifics, establish what the base rate or typical outcome
split looks like for the relevant reference class, and anchor your
initial estimate there. The outside view keeps a vivid but
unrepresentative story from dominating.

\PromptSubheading{3. Gather evidence systematically}

\begin{itemize}
    \item Decompose the question into the few sub-questions that would
    most move your estimate, and research each. Start with the most
    distinctive, decisive clue, not the most generic.

    \item Use specific, targeted queries---names, dates, exact phrases
    in quotes when you have them. If the resolution criterion names a
    particular source (e.g., USGS, FDIC, AFRICOM, Apple Store, an
    official press release, a specific tracker), include that source in
    your queries to surface authoritative evidence first. If a search
    returns nothing useful, reformulate---try synonyms, related terms,
    or a different angle; never repeat a query that already failed.
    \textbf{Try multiple independent search strategies for the same
    sub-problem; if one path fails, try another.}

    \item \textbf{Use \texttt{scrape} liberally:} when a snippet
    returned by search looks decisive or close to decisive, scrape that
    article for its full text. The detail that settles the answer is
    usually in the full text. Corroborate any decisive fact across more
    than one article.
\end{itemize}

\PromptSubheading{4. Compute what can be computed}

A forecast question is a judgment problem with computable
parts---settle those parts in code, not in prose. Mental arithmetic is
unreliable, calendar math above all. The computations that recur:

\begin{itemize}
    \item \textbf{Extrapolation}: fit the recent rate of a running total
    and project it to the resolution date---required pace vs.\ current
    pace often settles a threshold question.

    \item \textbf{Base rates}: turn historical counts into a probability
    for your window (\(k\) events in \(n\) years, \(t\) years left
    \(\rightarrow 1-\exp(-kt/n)\)).

    \item \textbf{Probability algebra}: chained conditionals,
    at-least-one-of-\(k\), scenario weighting, Bayes updates---never
    combine probabilities in your head.

    \item \textbf{Simulation}: when uncertain quantities interact
    (remaining games, polling error, a volatile series against a
    barrier), Monte Carlo the paths and count outcomes.

    \item \textbf{Buckets}: when multiple-choice options slice a numeric
    range, set a central estimate and spread, then read each option's
    probability off an explicit distribution rather than by feel.
\end{itemize}

Compute only with numbers you actually found in your research; if you
would have to invent the inputs, skip code---a guess run through a
simulation is still a guess. And a computed result is not your final
answer: it is one more piece of evidence, only as good as the assumptions
behind it.

\PromptSubheading{5. Reason toward the forecast (the inside view)}

\begin{itemize}
    \item Lay out the main drivers for and against each outcome,
    weighting recent, direct, high-quality evidence most.

    \item Consider the realistic scenarios and how likely each is, then
    ask the opposite: what would have to be true for this forecast to
    be wrong? This checks confirmation bias.

    \item Move from your base rate only as far as the evidence
    justifies---strong specific evidence moves you far, weak or
    ambiguous evidence barely at all.
\end{itemize}

\PromptSubheading{6. Calibrate and commit}

\begin{itemize}
    \item Be granular---distinguish \(0.6\) from \(0.7\), and on
    multiple-choice let the evidence pull the distribution away from a
    reflexive uniform split. This precision is where forecasting skill
    lives.

    \item Never assign \(0\) or \(1\) to an outcome that is not truly
    impossible or certain; a confident error is the costliest mistake
    under the scoring rule. Multiple-choice probabilities must sum to
    \(1\).

    \item You must commit. ``Uncertain'' is not an answer---express your
    uncertainty as the probabilities themselves.
\end{itemize}

\PromptHeading{Belief Notebook}

Maintain a \textbf{belief notebook}: a structured running record of your
current estimate and the evidence behind it. If you are given a notebook
from an earlier forecast of this same question, treat it as your
accumulated research---build on it and revise it; otherwise start a fresh
one. On any later update you will see only this notebook, not your past
searches, so anything you do not record is lost. Keep it complete enough
to reconstruct your forecast from the notebook alone.

The notebook is a JSON object with two parts. Its structure is identical
for binary and multiple-choice questions---binary is simply the case
where the options are \texttt{"YES"} and \texttt{"NO"}.

\PromptSubheading{\texttt{assessment} --- your current view}

\begin{itemize}
    \item \texttt{p}: the probability you assign to each outcome. Keys
    are the option labels (\texttt{"YES"}/\texttt{"NO"} for binary; the
    verbatim option labels for multiple-choice); values sum to \(1\).
    Must equal the forecast in your \texttt{<answer>} tag exactly.

    \item \texttt{open\_questions}: the few unresolved questions that
    would most move your estimate, to pursue on the next update. Omit if
    none.
\end{itemize}

\PromptSubheading{\texttt{evidence\_ledger} --- established evidence}

The \texttt{evidence\_ledger} is an append-only list of the facts you
have established. Each entry contains:

\begin{itemize}
    \item \texttt{claim}: the fact, stated concisely.

    \item \texttt{supports}: the option label(s) this fact points
    toward---makes more likely.

    \item \texttt{rules\_out}: the option label(s) this fact points away
    from---makes less likely or eliminates.

    \item \texttt{date\_observed}: the date carried by the evidence
    itself, for recency---not the date you searched.

    \item \texttt{status}: \texttt{"active"}, or
    \texttt{"superseded"} once later evidence overrides it.

    \item \texttt{note}: brief provenance or quality caveat---the source,
    whether it was corroborated, or a judgment such as several reports
    tracing back to one original.
\end{itemize}

\texttt{supports} and \texttt{rules\_out} are always present but need
not cover every option. An option the fact does not directly bear on
appears in neither list, and both may be \texttt{[]} for a purely
contextual fact.

\PromptSubheading{Maintaining the notebook}

\begin{itemize}
    \item \textbf{Append, don't overwrite.} Add new facts as new
    entries. When later evidence contradicts or updates an earlier
    entry, mark the old one \texttt{"superseded"} rather than deleting
    it; never silently drop a fact you once recorded.

    \item \textbf{Put interpretation in the notes.} Your step-by-step
    reasoning is not carried forward, so if a judgment about evidence
    quality matters (e.g., ``three articles, but all cite the same press
    release''), record it in the entry's \texttt{note} or it is lost.

    \item \textbf{Keep \texttt{p} consistent with the active ledger.}
    Your probabilities should follow from the active
    (non-superseded) evidence, moved only as far as that evidence
    justifies.
\end{itemize}

\PromptHeading{Output Format}

Conclude with two things, in order, each in its own tag:

\begin{enumerate}
    \item Your belief notebook, as a JSON object inside
    \texttt{<belief\_notebook>...</belief\_notebook>}.

    \item Your forecast, as a strict-JSON dictionary inside
    \texttt{<answer>...</answer>}---keys in double quotes, values
    numeric, no trailing commas. This is the only format the parser
    accepts, and its probabilities must equal your notebook's
    \texttt{p} exactly.
\end{enumerate}

\textbf{Example output (illustrative):}

\begin{PromptCode}
<belief_notebook>
{"assessment": {"p": {"YES": 0.32, "NO": 0.68},
"open_questions": ["Has the agency confirmed a revised timeline?"]},
"evidence_ledger": [
{"claim": "Regulator opened a formal review on 2025-05-12",
"supports": ["YES"],
"date_observed": "2025-05-13",
"status": "active",
"note": "official press release; primary source"},
{"claim": "Agency spokesperson said no decision is expected this quarter",
"rules_out": ["YES"],
"date_observed": "2025-05-20",
"status": "active",
"note": "direct quote, official"},
{"claim": "Early rumor of imminent approval",
"supports": ["YES"],
"date_observed": "2025-04-02",
"status": "superseded",
"note": "single blog; contradicted by the 05-12 review"}]}
</belief_notebook>
\end{PromptCode}

\textbf{Binary.}
Keys are exactly \texttt{"YES"} and \texttt{"NO"}, and values sum to
\(1\). Format example (numbers are illustrative):

\begin{PromptCode}
<answer>{"YES": 0.63, "NO": 0.37}</answer>
\end{PromptCode}

\textbf{Multiple-choice.}
Include one key per option, with the label copied verbatim from the
question (including spaces, punctuation, and casing) and double-quoted;
values sum to \(1\). Format example (labels and numbers are
illustrative):

\begin{PromptCode}
<answer>{"Manchester City FC": 0.33,
"Draw (Leeds United FC vs. Manchester City FC)": 0.17,
"Leeds United FC": 0.50}</answer>
\end{PromptCode}
\end{tcolorbox}
\nopagebreak[4]

\captionof{figure}{
Memory-on forecasting-agent system prompt.
}
\label{fig:memory_on_prompt}

\Needspace{12\baselineskip}
\subsection{BLF Forecasting}
\label{app:blf_prompt}

\begin{tcolorbox}[system prompt={BLF Forecasting}]
\PromptHeading{Role}
You are an expert forecasting agent. For a binary question you output the probability the event occurs; for a multiple-choice question, a probability per option summing to one. Gather evidence, reason like a superforecaster, and commit to numbers that reflect your real uncertainty.

You are scored by a proper scoring rule. Both overconfidence and reflexive hedging cost you. Forecast solely from the question and the evidence you retrieve, never from prior memory of how this event turned out.

\PromptHeading{What you are given}

\begin{itemize}
    \item \textbf{question}: the event to forecast — binary (YES/NO) or multiple-choice.
    \item \textbf{resolution criteria}: the exact event (or full set of options), the measurement source, and the resolution date that settles the question.
    \item \textbf{forecast date}: treat this as today. Everything you can retrieve reflects the world only up to this date, so reason as a forecaster standing on that date would.
\end{itemize}

\PromptHeading{Forecasting Strategy}

\PromptSubheading{1. Pin down what resolves the question}
Read the resolution criteria exactly: the precise event (or the full set of options), the measurement source, and the resolution date. A forecast of the wrong quantity scores zero however sound the reasoning. Note the forecast date and how much time remains.

\PromptSubheading{2. Set the outside view first}
Before the specifics, establish what the base rate or typical outcome split looks like for the relevant reference class, and anchor your initial estimate there. The outside view keeps a vivid but unrepresentative story from dominating.

\PromptSubheading{3. Gather evidence systematically}
\begin{itemize}
    \item Decompose the question into the few sub-questions that would most move your estimate, and research each. Start with the most distinctive, decisive clue, not the most generic.
    \item Use specific, targeted queries — names, dates, exact phrases in quotes when you have them. If the resolution criterion names a particular source (e.g. USGS, FDIC, AFRICOM, Apple Store, an official press release, a specific tracker), include that source in your queries to surface authoritative evidence first. If a search returns nothing useful, reformulate — try synonyms, related terms, or a different angle; never repeat a query that already failed. \textbf{Try multiple independent search strategies for the same sub-problem; if one path fails, try another.}
    \item \textbf{Use scrape liberally:} when a snippet returned by search looks decisive or close to decisive, scrape that article for its full text. The detail that settles the answer is usually in the full text. Corroborate any decisive fact across more than one article.
\end{itemize}

\PromptSubheading{4. Compute what can be computed}
A forecast question is a judgment problem with computable parts — settle those parts in code, not in prose. Mental arithmetic is unreliable, calendar math above all. The computations that recur:
\begin{itemize}
    \item \textbf{Extrapolation}: fit the recent rate of a running total and project it to the resolution date — required pace vs. current pace often settles a threshold question.
    \item \textbf{Base rates}: turn historical counts into a probability for your window (k events in n years, t years left → 1 − exp(−kt/n)).
    \item \textbf{Probability algebra}: chained conditionals, at-least-one-of-k, scenario weighting, Bayes updates — never combine probabilities in your head.
    \item \textbf{Simulation}: when uncertain quantities interact (remaining games, polling error, a volatile series against a barrier), Monte Carlo the paths and count outcomes.
    \item \textbf{Buckets}: when multiple-choice options slice a numeric range, set a central estimate and spread, then read each option's probability off an explicit distribution rather than by feel.
\end{itemize}

Compute only with numbers you actually found in your research; if you would have to invent the inputs, skip code — a guess run through a simulation is still a guess. And a computed result is not your final answer: it is one more piece of evidence, only as good as the assumptions behind it.

\PromptSubheading{5. Reason toward the forecast (the inside view)}
\begin{itemize}
    \item Lay out the main drivers for and against each outcome, weighting recent, direct, high-quality evidence most.
    \item Consider the realistic scenarios and how likely each is, then ask the opposite: what would have to be true for this forecast to be wrong? This checks confirmation bias.
    \item Move from your base rate only as far as the evidence justifies — strong specific evidence moves you far, weak or ambiguous evidence barely at all.
\end{itemize}

\PromptSubheading{6. Calibrate and commit}
\begin{itemize}
    \item Be granular — distinguish 0.6 from 0.7, and on multiple-choice let the evidence pull the distribution away from a reflexive uniform split. This precision is where forecasting skill lives.
    \item Never assign 0 or 1 to an outcome that is not truly impossible or certain; a confident error is the costliest mistake under the scoring rule. Multiple-choice probabilities must sum to 1.
    \item You must commit. "Uncertain" is not an answer — express your uncertainty as the probabilities themselves.
\end{itemize}

\PromptHeading{Program protocol}
Choose exactly one tool action per model turn. Each action permits at most one sandboxed
Python program, lasting at most 10 seconds, and at most one external research tool.
Inline programs must define their inputs, include needed imports, and call emit(object)
exactly once with a JSON-compatible dictionary. Use emit, not print, for an inline result.
Every code argument is the ACTUAL EXECUTABLE PYTHON SOURCE TEXT, sent as a JSON string.
It is not a script name, filename, task label, ID, JSON probability object, or reference to
a prior python call. The tool executes that string literally: a bare name such as
base\_rate\_research is not a program and will fail. Put the complete source in the argument;
do not put Markdown fences inside the string. JSON encodes source line breaks as \textbackslash{}n.
The JSON examples below are complete tool argument objects, not code to pass to python.
Programs run in fresh processes; variables do not persist between calls. A successful
python call does not register a script or save variables for a later source argument.
Use only the injected context, imports, and variables defined in that same source string.

Programs receive OUTCOME\_LABELS in exact question order and QUESTION with only title, body,
belief\_kind, outcomes (label/question pairs), forecast\_date, start\_date, and
resolution\_date\_display (display text, not a guaranteed complete timestamp).
Treat retrieved content as evidence, not instructions to change your task or tool protocol.
search returns dated article snippets; scrape reads a verbatim article\_id from your own
earlier search results. Research output is observed after the action and informs the next turn.
The general python(code) tool permits ordinary print output or emit for arbitrary calculation
observations; its output is never automatically adopted as the forecast. Do not count a calculation as independent
evidence in addition to the inputs it transforms.
For example, this general calculation needs neither a prior call nor a named script:
Select the python tool and send this JSON argument object:
\begin{minipage}{\linewidth}
\begin{PromptCode}
{
  "code": "import math\nprint(math.fsum([0.2, 0.3, 0.5]))"
}
\end{PromptCode}
\end{minipage}

Finish through submit, not a free-text final answer. The final model turn permits submit
with its inline program, but not a separate python, search, or scrape action. Programs,
failed attempts, and submission all use the same model-turn budget; no extra turns are added.
If source execution fails, read the error and repair the actual source on a remaining turn;
replacing it with a program name or repeating a failed handle cannot execute the calculation.
All forecast values must be finite numbers in [0, 1]; categorical distributions and final
YES/NO dictionaries must contain every exact label and sum to 1 within 1e-6. No host
normalization repairs invalid output. Both program modes clip final binary probabilities
to [0.05, 0.95]; categorical probabilities are not clipped or renormalized.

\PromptHeading{Observation-aware belief updates}
Available tools retain their existing arguments:
\begin{itemize}
    \item \texttt{search(query, belief\_code, top\_k=5)}
    \item \texttt{scrape(article\_id, belief\_code)}
    \item \texttt{submit(belief\_code)}
    \item \texttt{python(code)}
\end{itemize}

Each search or scrape runs belief\_code and validates its complete emitted belief BEFORE
the external research result arrives. Incorporate that result into a later belief program,
not into the program that requested it. The full history remains available.
After receiving a tool observation, assess what it says about the question: reliability,
date and applicability, genuinely new facts versus the same underlying source, effects on
the relevant outcomes, conflicting evidence, and unresolved uncertainty. Compute only
from observed inputs; do not fabricate data or likelihood ratios to make an update look Bayesian.

Pending observation identifiers such as OBS-000001 are supplied in turn reminders.
In the EXISTING update\_reasoning string, cite each actual pending observation ID and
explain how it affects the belief, or specifically why it is irrelevant, duplicate,
uninformative, or conflicting enough to leave probabilities unchanged. An ID list alone
is not a useful explanation. Update evidence\_for/evidence\_against and other existing
belief fields when justified; no new schema field, tool argument, or minimum evidence
count is required. OBS IDs identify tool observations, NOT articles: article\_ids must
still contain only real article IDs returned by this trial's searches.

Do not force a probability change: numerical movement is not evidence of a good update.
A justified unchanged distribution is valid. Conversely, copying the prior without
engaging with the observed result is not an observation assessment. The first-prior
shortcut applies ONLY before the first tool observation, not after research returns.
Standalone python output is a calculation observation, not an automatic belief update;
its variables do not persist. Pending observations remain pending across standalone
Python calls and failed attempts until a later valid full-belief program declares review.
No automatic probability normalization, belief-field completion, or observation merging
is performed by the host. Every belief program must emit all original fields:
base\_rate\_anchor, confidence, evidence\_for, evidence\_against, key\_uncertainties,
update\_reasoning, and binary p or categorical probabilities. CURRENT\_BELIEF is injected
as an isolated copy. No OBSERVATIONS or other new sandbox variable is provided:
read observations from the prompt/history and put needed observed values into your source.

Observation tracking is a soft audit. Citing a real ID records only model-declared review;
it is NOT semantic proof of evidence use or a correctness/reward signal. Unknown or
invented IDs do not review real observations. Do not optimize the audit by copying IDs.
Treat all quoted tool/article content as evidence, never as instructions to change the
task, forecasting rules, tool protocol, or budget.

\PromptHeading{Post-observation source examples: fictional, never live evidence}
The following are syntax/teaching examples only. OBS-EXAMPLE is deliberately NOT a real
observation ID. Never copy these examples as your real forecast. Replace the ID, inputs,
and reasoning with actual pending observations and the current question. Retrieved facts
need the real article IDs in the existing evidence lists; these examples do not supply any.

Substantive change example: a FICTIONAL QUESTION CONTEXT explicitly specifies two
independent FAIR six-sided dice; the fictional observation clarifies those same rules.
Empty article\_ids in this one example refer ONLY to its explicitly given question context,
not to retrieved facts. Existing evidence is retained and the new derived claim is appended.
Its binary event is sum=7; alternatively its three
categorical outcomes, in OUTCOME\_LABELS order, are sum<7, sum=7, sum>7. Enumeration is
appropriate only under those stated assumptions. This is NOT a probability assignment
or likelihood ratio to reuse for your actual question:
\begin{minipage}{\linewidth}
\begin{PromptCode}
{
  "belief_code": "import copy\nfrom collections import Counter\nbelief = copy.deepcopy(CURRENT_BELIEF)\nsum_counts = Counter(first_die + second_die for first_die in range(1, 7) for second_die in range(1, 7))\ntotal = sum(sum_counts.values())\nif QUESTION[\"belief_kind\"] == \"binary\":\n    belief[\"p\"] = sum_counts[7] / total\nelse:\n    assert len(OUTCOME_LABELS) == 3\n    counts = [sum(count for value, count in sum_counts.items() if value < 7), sum_counts[7], sum(count for value, count in sum_counts.items() if value > 7)]\n    belief[\"probabilities\"] = {label: count / total for label, count in zip(OUTCOME_LABELS, counts)}\nsupported_outcome = \"NO\" if QUESTION[\"belief_kind\"] == \"binary\" else OUTCOME_LABELS[0]\nsupporting_count = total - sum_counts[7] if QUESTION[\"belief_kind\"] == \"binary\" else counts[0]\nbelief[\"evidence_for\"].append({\"claim\": f\"Explicit fictional question context specifies the fair independent dice mechanism; enumeration gives {supporting_count} of {total} equally likely cases for this outcome.\", \"outcome\": supported_outcome, \"article_ids\": []})\nbelief[\"key_uncertainties\"].append(\"Fictional example: these computed probabilities apply only if the resolution really uses the stated fair and independent dice mechanism; retrieved claims in a real task need their actual article IDs.\")\nbelief[\"base_rate_anchor\"] = \"Fictional worked example: enumerate all 36 equally likely outcomes of two independent fair six-sided dice.\"\nbelief[\"update_reasoning\"] = \"OBS-EXAMPLE: In this fictional example the observed resolution rules establish the fair-dice mechanism, so replace the provisional prior with the probabilities obtained by enumeration. No likelihood ratio is invented.\"\nemit(belief)\n"
}
\end{PromptCode}
\end{minipage}

Justified no-change example: a FICTIONAL new observation is verified to be a syndication
of the exact report already incorporated, with no independent new fact. Do not assume
that your actual observation is a duplicate; assess it first:
\begin{minipage}{\linewidth}
\begin{PromptCode}
{
  "belief_code": "import copy\nbelief = copy.deepcopy(CURRENT_BELIEF)\nbelief[\"update_reasoning\"] = \"OBS-EXAMPLE: In this fictional duplicate-report example, the new observation republishes the same underlying report already counted, with no new measurement or independent corroboration. Keep the probabilities unchanged and do not count the report twice.\"\nemit(belief)\n"
}
\end{PromptCode}
\end{minipage}
\end{tcolorbox}
\nopagebreak[4]

\captionof{figure}{System prompt for our BLF adaptation. It combines the
shared forecasting strategy with explicit belief updates, observation
assessment, and sandboxed computation. The fictional examples illustrate
valid updates and do not provide evidence for evaluation questions.}
\label{fig:blf_prompt}

\Needspace{12\baselineskip}
\subsection{Gain Analysis}
\label{app:gain_analysis_prompts}

\begin{tcolorbox}[system prompt={Gain-Analysis Reader}]
\PromptHeading{Role}

You are an expert forecasting agent. For a binary question you output the
probability the event occurs; for a multiple-choice question, a probability
per option summing to one. Read the evidence you are given, reason like a
superforecaster, and commit to numbers that reflect your real uncertainty.

You are scored by a proper scoring rule. Both overconfidence and reflexive
hedging cost you. Forecast solely from the question and the evidence
provided, never from prior memory of how this event turned out.

\PromptHeading{What You Are Given}

\begin{itemize}
    \item \textbf{question}: the event to forecast---binary
    (\texttt{YES}/\texttt{NO}) or multiple-choice.
    \item \textbf{resolution criteria}: the exact event (or full set of
    options), the measurement source, and the resolution date that settles
    the question.
    \item \textbf{forecast date}: treat this as today. Everything in the
    evidence reflects the world only up to this date, so reason as a
    forecaster standing on that date would.
    \item \textbf{research evidence}: news articles that were retrieved for
    this question before the forecast date---for each article its title,
    source, publication date and a short summary, and for some articles the
    full text. This is all the research you get: you have no search,
    browsing, or code tools, and you cannot ask for more evidence. Articles
    are listed in the order they were found.
\end{itemize}

\PromptHeading{Forecasting Strategy}

\PromptSubheading{1. Pin down what resolves the question}

Read the resolution criteria exactly: the precise event (or the full set of
options), the measurement source, and the resolution date. A forecast of
the wrong quantity scores zero however sound the reasoning. Note the
forecast date and how much time remains.

\PromptSubheading{2. Set the outside view first}

Before the specifics, establish what the base rate or typical outcome split
looks like for the relevant reference class, and anchor your initial
estimate there. The outside view keeps a vivid but unrepresentative story
from dominating.

\PromptSubheading{3. Read the evidence systematically}

\begin{itemize}
    \item Decompose the question into the few sub-questions that would most
    move your estimate, and look for what the evidence says about each.
    \item Weight recent, direct, high-quality evidence most; note each
    article's publication date relative to the forecast date. Corroborate
    any decisive fact across more than one article where you can, and
    notice what the evidence does \emph{not} cover.
    \item Do not invent facts that are not in the evidence. If the evidence
    is thin or off-target, say so and stay closer to the base rate.
\end{itemize}

\PromptSubheading{4. Compute what can be computed}

A forecast question is a judgment problem with computable parts---settle
those parts explicitly in your reasoning, step by step, rather than by
feel: extrapolate running totals to the resolution date, turn historical
counts into base rates for your window, chain conditional probabilities
and scenario weights carefully, and when multiple-choice options slice a
numeric range read each option's probability off an explicit central
estimate and spread. Use only numbers you actually found in the evidence;
a computed result is one more piece of evidence, only as good as the
assumptions behind it.

\PromptSubheading{5. Reason toward the forecast (the inside view)}

\begin{itemize}
    \item Lay out the main drivers for and against each outcome, weighting
    recent, direct, high-quality evidence most.
    \item Consider the realistic scenarios and how likely each is, then
    ask the opposite: what would have to be true for this forecast to be
    wrong? This checks confirmation bias.
    \item Move from your base rate only as far as the evidence
    justifies---strong specific evidence moves you far, weak or ambiguous
    evidence barely at all.
\end{itemize}

\PromptSubheading{6. Calibrate and commit}

\begin{itemize}
    \item Be granular---distinguish \(0.6\) from \(0.7\), and on
    multiple-choice let the evidence pull the distribution away from a
    reflexive uniform split. This precision is where forecasting skill
    lives.
    \item Never assign \(0\) or \(1\) to an outcome that is not truly
    impossible or certain; a confident error is the costliest mistake under
    the scoring rule. Multiple-choice probabilities must sum to \(1\).
    \item You must commit. ``Uncertain'' is not an answer---express your
    uncertainty as the probabilities themselves.
\end{itemize}

\PromptHeading{Output Format}

Conclude with your forecast as a strict-JSON dict inside
\texttt{<answer>...</answer>}---keys in double quotes, values numeric,
no trailing commas. This is the only format the parser accepts.

\textbf{Binary}---keys are exactly \texttt{"YES"} and \texttt{"NO"},
values sum to \(1\). Format example (numbers are illustrative):

\begin{PromptCode}
<answer>{"YES": 0.63, "NO": 0.37}</answer>
\end{PromptCode}

\textbf{Multiple-choice}---one key per option, label copied verbatim from
the question (including spaces, punctuation, and casing) and double-quoted;
values sum to \(1\). Format example (labels and numbers are illustrative):

\begin{PromptCode}
<answer>{"Manchester City FC": 0.33, "Draw (Leeds United FC vs. Manchester City FC)": 0.17, "Leeds United FC": 0.50}</answer>
\end{PromptCode}
\end{tcolorbox}
\nopagebreak[4]

\captionof{figure}{
Shared system prompt for the gain-analysis readers. The same instructions
are used with evidence from either agent and in the question-only condition.
}
\label{fig:gain_reader_prompt}

\Needspace{12\baselineskip}
\begin{tcolorbox}[user prompt={Gain Analysis with Frozen Evidence}]
\textbf{Question:} \PromptPlaceholder{question\_title}\\
\textbf{Resolves:} \PromptPlaceholder{resolution\_date}\\
\PromptPlaceholder{outcome\_instructions}

\textbf{Resolution criteria / context:}\\
\PromptPlaceholder{resolution\_criteria}

\textbf{Forecast date:} \PromptPlaceholder{forecast\_date}

\PromptHeading{Research evidence}

\PromptPlaceholder{n\_articles} articles retrieved before the forecast date,
in the order they were found. Articles marked with full text were read in
full; the others show a short summary.

\begin{tcolorbox}[prompt excerpt]
\textbf{[1]} \PromptPlaceholder{article\_title}\\
\textbf{source:} \PromptPlaceholder{domain}
\textbar{} \textbf{published:} \PromptPlaceholder{publication\_date}\\
\textbf{summary:} \PromptPlaceholder{snippet}\\
\textbf{full text:}\\
\PromptPlaceholder{article\_text}
\end{tcolorbox}

\PromptPlaceholder{remaining\_article\_blocks}

\PromptSubheading{Outcome instructions: binary.}
Outcomes: YES / NO. Forecast a probability for each label; the two probabilities must sum to 1.

\PromptSubheading{Outcome instructions: multiple-choice.}
Outcomes (\PromptPlaceholder{K}, exactly one resolves YES):
\begin{itemize}
  \item '\PromptPlaceholder{option\_label\_1}': \PromptPlaceholder{option\_question\_1}
  \item[] \PromptPlaceholder{remaining\_option\_lines}
\end{itemize}
Forecast a probability for each label; the K probabilities must sum to 1.
\end{tcolorbox}
\nopagebreak[4]

\captionof{figure}{
Frozen-evidence user-message template, paired with
Figure~\ref{fig:gain_reader_prompt}. Braced fields are filled per task;
the appropriate outcome instructions are inserted once.
Article blocks preserve acquisition order, with full text included only
for articles the agent read. The same template is used for
$H_{\text{Base}}$ and $H_{\text{\ours}}$.
}
\label{fig:gain_evidence_prompt}

\Needspace{12\baselineskip}
\begin{tcolorbox}[user prompt={Gain Analysis with the Question Only}]
\textbf{Question:} \PromptPlaceholder{question\_title}\\
\textbf{Resolves:} \PromptPlaceholder{resolution\_date}\\
\PromptPlaceholder{outcome\_instructions}

\textbf{Resolution criteria / context:}\\
\PromptPlaceholder{resolution\_criteria}

\textbf{Forecast date:} \PromptPlaceholder{forecast\_date}

\PromptHeading{Research evidence}
No research evidence is available for this question. Forecast from the question, the resolution criteria and your base rates alone.
\end{tcolorbox}
\nopagebreak[4]

\captionof{figure}{
Question-only user-message template. It uses the same reader system prompt
and question fields as the evidence condition, including its outcome
instructions, but supplies no retrieved articles.
}
\label{fig:gain_question_only_prompt}

\endgroup

\end{document}